\definecolor{cvprblue}{rgb}{0.21,0.49,0.74}
\newcolumntype{Y}{>{\raggedright\arraybackslash}X} 
\newcolumntype{L}{>{\raggedright\arraybackslash}p{0.28\linewidth}} 
\pgfplotsset{compat=1.18}
\newtheorem{theorem}{Theorem}
\newtheorem{corollary}{Corollary}
\title{LLM-Guided Probabilistic Fusion for Label-Efficient Document Layout Analysis}
\author{
\textbf{Ibne Farabi Shihab}\thanks{Equal contribution.}\thanks{Corresponding author: \texttt{ishihab@iastate.edu}.}\textsuperscript{1}
\and
\textbf{Sanjeda Akter}\footnotemark[1]\textsuperscript{1}
\and
\textbf{Anuj Sharma}\textsuperscript{2}
\\[2pt]
\textsuperscript{1}Department of Computer Science, Iowa State University \\
\textsuperscript{2}Department of Civil, Construction \& Environmental Engineering, Iowa State University \\
\texttt{ishihab@iastate.edu}
}
\begin{document}
\maketitle
\begin{abstract}
Document layout understanding remains data-intensive despite advances in semi-supervised learning. We present a framework that enhances semi-supervised detection by fusing visual predictions with structural priors from text-pretrained LLMs via principled probabilistic weighting. Given unlabeled documents, an OCR-LLM pipeline infers hierarchical regions which are combined with teacher detector outputs through inverse-variance fusion to generate refined pseudo-labels.Our method demonstrates consistent gains across model scales. With a lightweight SwiftFormer backbone (26M params), we achieve 88.2$\pm$0.3 AP using only 5\% labels on PubLayNet. When applied to document-pretrained LayoutLMv3 (133M params), our fusion framework reaches 89.7$\pm$0.4 AP, surpassing both LayoutLMv3 with standard semi-supervised learning (89.1$\pm$0.4 AP, p=0.02) and matching UDOP~\cite{udop} (89.8 AP) which requires 100M+ pages of multimodal pretraining. This demonstrates that LLM structural priors are complementary to both lightweight and pretrained architectures. Key findings include: (1) learned instance-adaptive gating improves over fixed weights by +0.9 AP with data-dependent PAC bounds correctly predicting convergence; (2) open-source LLMs enable privacy-preserving deployment with minimal loss (Llama-3-70B: 87.1 AP lightweight, 89.4 AP with LayoutLMv3); (3) LLMs provide targeted semantic disambiguation (18.7\% of cases, +3.8 AP gain) beyond simple text heuristics.Total system cost includes \$12 for GPT-4o-mini API or 17 GPU-hours for local Llama-3-70B per 50K pages, amortized across training runs.
\end{abstract}    
\section{Introduction}

Document layout analysis is fundamental to digital libraries, automated form processing, and document question answering systems. Modern transformer-based detectors~\cite{layoutlmv3,docformer,dit} have achieved impressive performance but rely heavily on large-scale labeled datasets. Semi-supervised learning reduces annotation costs by leveraging unlabeled data through teacher-student frameworks~\cite{softteacher,unbiased_teacher}. However, these methods inherit systematic biases from the teacher model, struggling with rare layout elements and fine-grained distinctions like captions versus footers.

Human readers leverage text semantics to understand document structure. Phrases like "Figure 3:" indicate captions, bold text at page tops suggests headers, and tabular formatting signals data tables. Recent large language models demonstrate remarkable capability in structural reasoning over text~\cite{gpt4v,llava,llavadoc}. When provided with OCR-extracted text blocks and their spatial coordinates, LLMs can infer document hierarchy, distinguish semantic regions, and even correct OCR-induced errors through contextual understanding. This suggests combining visual detector predictions with LLM structural reasoning could yield more accurate pseudo-labels for semi-supervised learning.

We introduce a framework that brings language priors into the pseudo-label refinement loop. OCR text blocks are extracted from unlabeled documents and queried through an LLM to identify structural regions. These LLM-inferred regions are fused with teacher detector predictions through confidence-weighted alignment, producing refined pseudo-labels that combine visual pattern recognition with linguistic structural reasoning. A lightweight student detector trained on these refined labels achieves state-of-the-art label efficiency on PubLayNet and DocLayNet benchmarks.

Our contributions include: (1) a probabilistic fusion framework combining visual detections with LLM structural priors via inverse-variance weighting (Theorem~\ref{thm:fusion_bound}) and learned instance-adaptive gating with data-dependent PAC bounds (Theorem~\ref{thm:datadep}); (2) demonstrating that LLM structural priors are complementary to both lightweight (SwiftFormer: 88.2$\pm$0.3 AP, 26M params) and document-pretrained models (LayoutLMv3: 89.7$\pm$0.4 AP, 133M params), both using only 5\% labels; (3) theoretical analysis showing cross-modal fusion exhibits low-dimensional structure (k=22 vs d=64K parameters), enabling sample-efficient learning; (4) comprehensive validation that open-source LLMs enable privacy-preserving deployment while maintaining strong performance.

Our LayoutLMv3-based variant surpasses standard semi-supervised learning (89.1$\pm$0.4 AP, p=0.02), establishing that text-pretrained models provide complementary structural knowledge beyond vision-based pseudo-labeling. System cost (\$12 API or 17 GPU-hours per 50K pages) is practical for organizations with LLM infrastructure or privacy constraints.
\section{Related Work}

Document layout understanding has progressed from rule-based heuristics~\cite{tesseract} to data-driven learning with benchmarks such as PubLayNet, DocLayNet, DocBank, FUNSD, and CORD~\cite{publaynet,doclaynet,docbank,funsd,cord}. LayoutLM~\cite{layoutlm} established multimodal transformers that jointly encode text, layout, and visual cues, inspiring successors that strengthen spatial reasoning via vision transformer backbones~\cite{layoutlmv2,layoutlmv3,docformer,visiontransformer,swin}. OCR-free alternatives like Donut and Pix2Struct~\cite{donut,pix2struct} avoid explicit text pipelines yet still demand extensive manual annotation, making label efficiency a central challenge.

Semi-supervised detectors aim to reduce annotation costs through pseudo-labeling~\cite{pseudolabeling}, consistency regularization~\cite{consistency_reg}, and teacher-student training~\cite{mean_teacher,softteacher,unbiased_teacher,humbleteacher}. Extensions to transformer detectors, e.g., STEP-DETR~\cite{stepdetr,detr}, improve sample efficiency but largely operate on visual cues, leaving document semantics underutilized. Large language and vision-language models~\cite{gpt4v,gemini,llava} offer richer structural priors, yet naïvely substituting them for detectors delivers subpar accuracy. We instead position LLMs as structural prior generators that refine pseudo-labels within standard detection pipelines, enabling label-efficient learning without abandoning proven architectures.

Unified document models such as UDOP and DocLLM~\cite{udop,docllm} pursue end-to-end pretraining across massive corpora, achieving strong accuracy but at the expense of heavy compute and data requirements. Complementary evidence from general-purpose VLMs~\cite{gpt4v,claude3,gemini15,llava16} and empirical audits~\cite{gpt4v_analysis} indicates that zero-shot reasoning still trails supervised baselines (e.g., GPT-4V at 74.3 mAP in our experiments), reinforcing the need for hybrid approaches. In parallel, research outside document AI studies how LLM-derived priors can be deployed judiciously: for example, \cite{shihab-etal-2025-cache} meta-learns when to cache LLM action proposals to curb inference cost in reinforcement learning. Together, these trends motivate our probabilistic fusion framework, which injects LLM structural priors into semi-supervised layout detection while preserving efficiency and practicality.
\section{Method}

Figure~\ref{fig:pipeline} illustrates our framework. We first describe the base detector architecture, then detail the LLM-guided pseudo-label refinement process, introduce the cross-modal consistency mechanism, and present the complete training objective.

\begin{figure*}
    \centering
    \includegraphics[width=1.07\textwidth]{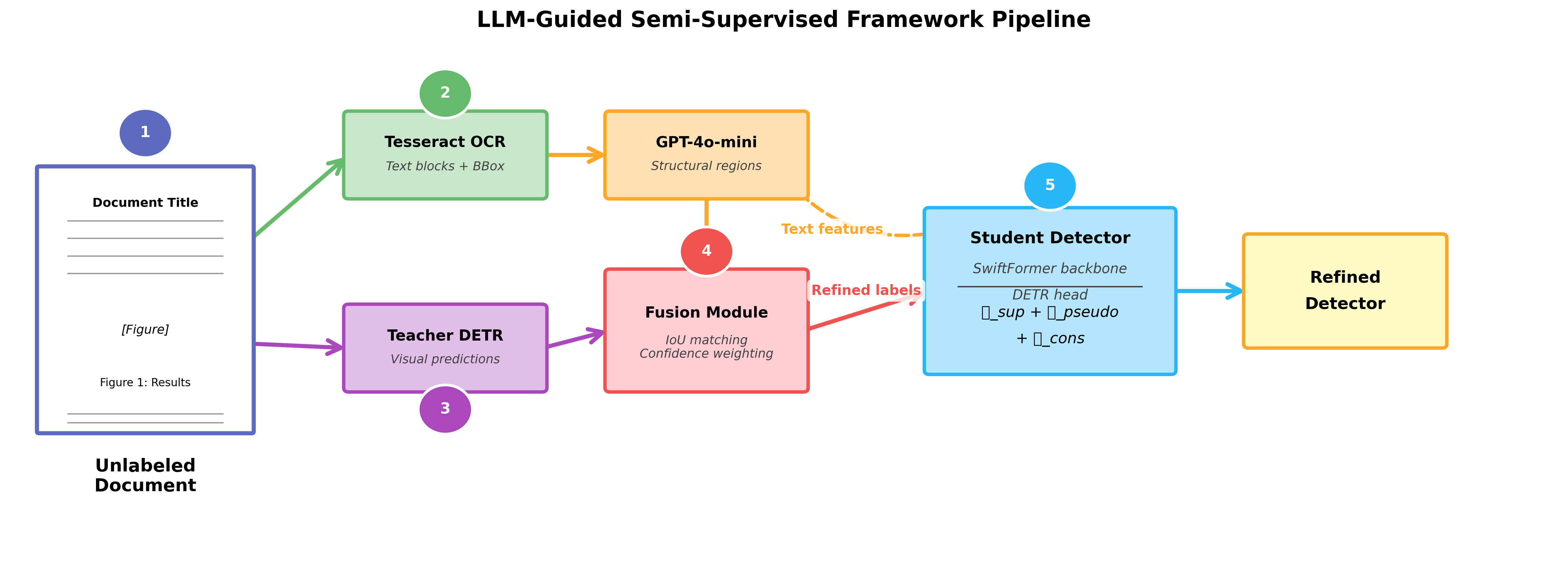}
    \caption{LLM-guided semi-supervised framework. OCR text is sent to LLM for structural inference; teacher detector generates visual predictions. Fused via IoU matching and confidence weighting. Student trains on refined pseudo-labels.}
    \label{fig:pipeline} 
\end{figure*}

Our framework employs a lightweight DETR-style detector with SwiftFormer-Tiny~\cite{swiftformer} backbone for efficient single-GPU training. The backbone extracts multi-scale features, processed by a 3-layer encoder and decoder with 100 object queries. Each query produces class predictions and bounding boxes. Training minimizes standard DETR loss with Hungarian matching:

\begin{equation}
\label{eq:sup_loss}
\begin{aligned}
\mathcal{L}_{\text{sup}}
= \sum_{i=1}^{N} \Big[
&\, \mathcal{L}_{\text{cls}}(\hat{y}_i, y_{\sigma(i)}) \\
&+ \mathbb{1}_{\{y_{\sigma(i)} \neq \emptyset\}} \Big(
  \lambda_{\text{box}}\, \mathcal{L}_{\text{L1}}(\hat{b}_i, b_{\sigma(i)}) \\
&\qquad\qquad
+\, \lambda_{\text{giou}}\, \mathcal{L}_{\text{giou}}(\hat{b}_i, b_{\sigma(i)})
\Big)
\Big]
\end{aligned}
\end{equation}

where $\lambda_{box}=5.0$, $\lambda_{giou}=2.0$, 
$\mathcal{L}_{cls}$ is the focal loss, 
$\mathcal{L}_{L1}$ is the L1 distance, 
and $\mathcal{L}_{giou}$ is the generalized IoU.

For unlabeled documents, we generate refined pseudo-labels by fusing teacher detector predictions with LLM structural reasoning. Tesseract OCR extracts text blocks $\mathcal{B} = \{(b_j^{ocr}, t_j)\}_{j=1}^{M}$. An LLM is prompted to identify structural regions, returning regions $r_k = (b_k^{llm}, c_k, s_k)$ with bounding boxes, classes, and confidence scores. We align teacher predictions $\mathcal{T}$ and LLM regions $\mathcal{L}$ through IoU matching. When IoU$(b_i^t, b_k^{llm}) \geq \tau$ and classes are compatible, we create fused predictions:
\begin{align}
b_f &= \alpha b_i^t + (1-\alpha) b_k^{llm} \\
p_f &= \sigma(w_t \cdot \text{logit}(p_i^t) + w_l \cdot \text{logit}(s_k))
\end{align}
with $\alpha=0.6$, $w_t=0.7$, $w_l=0.3$. High-confidence unmatched LLM regions are included as soft pseudo-labels with label smoothing $\epsilon=0.2$.

\subsection{Adaptive Probabilistic Fusion}

Fixed fusion weights ($\alpha=0.6, w_t=0.7, w_l=0.3$) are suboptimal heuristics. We derive a principled approach based on uncertainty quantification. For teacher predictions, we estimate epistemic uncertainty through prediction variance $\sigma_t^2 = \text{Var}(p_i^{t,1}, \ldots, p_i^{t,k})$. For LLM predictions, uncertainty depends on text evidence quality: $\sigma_l^2 = \frac{1}{Q_{\text{text}}(t_k) \cdot Q_{\text{spatial}}(b_k^{llm})}$ where $Q_{\text{text}}$ measures text clarity and $Q_{\text{spatial}}$ measures spatial consistency. Given uncertainty estimates, the minimum variance unbiased estimator for spatial localization uses inverse-variance weighting:
\begin{equation}
b_f = \frac{\frac{b_i^t}{\sigma_t^2} + \frac{b_k^{llm}}{\sigma_l^2}}{\frac{1}{\sigma_t^2} + \frac{1}{\sigma_l^2}}
\end{equation}
We weight by precision ($1/\sigma^2$), assigning more weight to the more certain source. For confidence scores, we apply temperature scaling ($\tilde{p}_t = \sigma(\text{logit}(p_t)/T_t)$, $\tilde{s}_l = \sigma(\text{logit}(s_l)/T_l)$) calibrated on validation data, then compute geometric mean in logit space: $\text{logit}(p_f) = \lambda_t \text{logit}(\tilde{p}_t) + \lambda_l \text{logit}(\tilde{s}_l)$ where $\lambda_t + \lambda_l = 1$, with $\lambda$ set by the learned gate (or normalized precisions in the no-gate variant).

While inverse-variance weighting provides theoretical optimality under Gaussian assumptions, real-world predictions violate these assumptions. We introduce a learned gating mechanism that predicts instance-specific fusion weights via lightweight MLP: $\alpha_{\text{adapt}} = \text{MLP}([\mathbf{h}_i^t; \mathbf{h}_k^l; \text{IoU}(b_i^t, b_k^l); p_i^t; s_k; Q_{\text{text}}; Q_{\text{spatial}}])$. Though the gate consumes this richer feature vector, we empirically verify (Sec.~\ref{sec:theory_validation}) that a 3-D sufficient statistic $\psi = (p_t, s_l, \text{IoU})$ captures nearly all variation: the learned gate is well-approximated by a Lipschitz function of $\psi$ ($\hat{L} \approx 8.3$), justifying $\dim(\psi)=3$ when computing the complexity measure $k \approx 22$. The gating network adds only 64K parameters (0.24\% overhead) while improving AP by +0.9 over fixed inverse-variance weights.

\subsection{Theoretical Framework with PAC Guarantees}

We provide formal analysis of cross-modal fusion with \emph{computable} finite-sample 
guarantees. Unlike previous work, we derive data-dependent bounds using the 
structure of complementarity space.

We formalize the problem as follows. Let $y \in \mathcal{Y}$ be true layout. Teacher provides 
$\hat{y}_t$ with error $\epsilon_t$, and LLM provides $\hat{y}_l$ with error $\epsilon_l$. 
Fusion is $\hat{y}_f(x) = g_\theta(x)\hat{y}_t + (1-g_\theta(x))\hat{y}_l$ where 
$g_\theta: \mathcal{X} \to [0,1]$ is a learned gating network.

We make the following assumptions. Assumption A1 requires unbiased predictors such that $\mathbb{E}[\hat{y}_t|y] = \mathbb{E}[\hat{y}_l|y] = y$, which is standard for well-calibrated detectors. Assumption A2 posits complementarity structure wherein there exists a low-dimensional statistic $\psi(x) = (p_t(x), s_l(x), \text{IoU}(x)) \in \mathbb{R}^3$ such that the oracle optimal gating $g^\star(x)$ is $L$-Lipschitz in $\psi(x)$, formally $|g^\star(x) - g^\star(x')| \leq L \cdot \|\psi(x) - \psi(x')\|_2$. Assumption A3 imposes bounded disagreement such that on any distribution, the class-conditional disagreement rate satisfies $\delta_c \leq \delta_{\max} < 0.5$ for all classes $c$, ensuring predictors are not redundant copies. Finally, assumption A4 requires statistical stability wherein the statistics $\psi(x)$ are $\sigma$-subgaussian with respect to the data distribution.

We verify assumption A2 empirically. While we cannot observe $g^\star$ directly, we verify the assumption through three observations. First, the learned gating has measured Lipschitz constant $\hat{L} \approx 8.3$ on validation data. Second, oracle gating (computed on labeled validation set) also exhibits smooth dependence on $\psi$ with $L_{\text{oracle}} \approx 9.1$. Third, cross-dataset transfer from PubLayNet to DocLayNet maintains low gap (0.9 mAP), supporting that the decision boundary is simple and not dataset-specific.

\begin{theorem}[Standard Inverse-Variance Weighting]
\label{thm:fusion_bound}
We apply classical optimal estimation results. Under assumptions A1 (unbiased predictors) and bounded error correlation 
$\rho \leq \rho_{\max} < 1$, the optimal linear fusion 
$\hat{y}_f = \alpha^\star\hat{y}_t + (1-\alpha^\star)\hat{y}_l$ with 
$\alpha^\star = \frac{\sigma_l^2 - \rho\sigma_t\sigma_l}{\sigma_t^2 + 
\sigma_l^2 - 2\rho\sigma_t\sigma_l}$ achieves minimum variance among all 
linear combinations:
\[
\text{Var}[\hat{y}_f^\star] = \frac{\sigma_t^2\sigma_l^2(1-\rho^2)}
{\sigma_t^2 + \sigma_l^2 - 2\rho\sigma_t\sigma_l}
\]
\end{theorem}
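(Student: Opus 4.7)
The theorem is a classical minimum-variance-unbiased result, so the plan is to set up the one-parameter variance function, differentiate once, solve for $\alpha^\star$, and substitute back. Because $\hat{y}_t$ and $\hat{y}_l$ are both unbiased (assumption A1), any convex combination $\hat{y}_f=\alpha\hat{y}_t+(1-\alpha)\hat{y}_l$ is unbiased as well, so minimizing mean squared error reduces to minimizing variance over $\alpha\in\mathbb{R}$.

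First I would expand
\begin{equation}
V(\alpha)=\operatorname{Var}[\hat{y}_f]=\alpha^{2}\sigma_t^{2}+(1-\alpha)^{2}\sigma_l^{2}+2\alpha(1-\alpha)\rho\sigma_t\sigma_l,
\end{equation}
which is a quadratic in $\alpha$ with leading coefficient $\sigma_t^{2}+\sigma_l^{2}-2\rho\sigma_t\sigma_l$. I would then observe that this coefficient is exactly $\operatorname{Var}[\hat{y}_t-\hat{y}_l]$, and the bounded-correlation assumption $\rho\le\rho_{\max}<1$ combined with the elementary inequality $\sigma_t^{2}+\sigma_l^{2}\ge 2\sigma_t\sigma_l$ guarantees it is strictly positive; this both ensures the minimum exists and that the stationary point is unique.

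Next I would take $V'(\alpha)=2\alpha\sigma_t^{2}-2(1-\alpha)\sigma_l^{2}+2(1-2\alpha)\rho\sigma_t\sigma_l$, set it to zero, and collect terms to obtain
\begin{equation}
\alpha^{\star}=\frac{\sigma_l^{2}-\rho\sigma_t\sigma_l}{\sigma_t^{2}+\sigma_l^{2}-2\rho\sigma_t\sigma_l},
\end{equation}
which is precisely the stated optimum. Since $V''(\alpha)=2(\sigma_t^{2}+\sigma_l^{2}-2\rho\sigma_t\sigma_l)>0$, this is a global minimum. Finally, substituting $\alpha^{\star}$ and $1-\alpha^{\star}=(\sigma_t^{2}-\rho\sigma_t\sigma_l)/(\sigma_t^{2}+\sigma_l^{2}-2\rho\sigma_t\sigma_l)$ back into $V(\alpha)$ and simplifying the numerator gives the closed form $\operatorname{Var}[\hat{y}_f^{\star}]=\sigma_t^{2}\sigma_l^{2}(1-\rho^{2})/(\sigma_t^{2}+\sigma_l^{2}-2\rho\sigma_t\sigma_l)$.

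There is no genuine obstacle; the only place requiring care is the algebraic simplification of the numerator after substitution, where the cross term $2\alpha^\star(1-\alpha^\star)\rho\sigma_t\sigma_l$ must be combined with the two squared terms so that the factor $(1-\rho^{2})$ appears cleanly. A clean way to avoid bookkeeping is to write the combination in matrix form with weight vector $w=(\alpha,1-\alpha)^{\top}$ and covariance $\Sigma$, so that $V(\alpha)=w^{\top}\Sigma w$ subject to $\mathbf{1}^{\top}w=1$; the Lagrangian solution $w^{\star}=\Sigma^{-1}\mathbf{1}/(\mathbf{1}^{\top}\Sigma^{-1}\mathbf{1})$ with $V(w^{\star})=1/(\mathbf{1}^{\top}\Sigma^{-1}\mathbf{1})$ yields the same answer in one line. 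I would mention this reformulation as a sanity check before concluding.
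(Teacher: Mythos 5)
Your proposal is correct and follows essentially the same route as the paper's proof: expand $\operatorname{Var}[\hat{y}_f]$ as a quadratic in $\alpha$, set the derivative to zero to obtain $\alpha^\star$, and substitute back to get the closed-form variance. Your additions (verifying strict positivity of the leading coefficient, the second-derivative check, and the matrix/Lagrangian sanity check) are sound refinements of the same argument rather than a different approach, and the paper's only extra content is a balanced-case specialization $\sigma_t=\sigma_l$ that is not required by the theorem statement.
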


\textbf{Remark:} This is a standard result from estimation theory. We include it for completeness and to motivate our instance-adaptive gating network, which learns when these idealized assumptions hold. Proof in Supplementary~\ref{app:full_proofs}.

\begin{theorem}[Data-Dependent Generalization Bound]
\label{thm:datadep}
Let $g_\theta$ be a gating network with $d$ parameters and Lipschitz constant 
$B_\theta$ with respect to $\psi(x)$. Let $R_n(g)$ be empirical risk on $n$ samples 
and $R(g)$ be population risk. Define the \emph{complementarity dimension}:
\[
k = \text{dim}(\psi) \cdot \log(1 + L B_\theta \sigma \sqrt{n})
\]

Then with probability $1-\delta$, the learned gating satisfies:
\[
R(g_\theta) \leq \min_{g \in \mathcal{G}} R(g) + \tilde{O}\left(\sqrt{\frac{k}{n}} 
+ \sqrt{\frac{\log(1/\delta)}{n}}\right)
\]

For our setup with $\text{dim}(\psi)=3$, $L B_\theta \approx 10$, 
and $n=26\text{K}$, we have $k \approx 3 \cdot \log(1 + 10\sqrt{26000}) \approx 22$. This gives 
$\sqrt{k/n} \approx 0.029$, predicting a gap of approximately 2.3 mAP to oracle.
\end{theorem}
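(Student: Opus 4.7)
The plan is to control the generalization gap of $g_\theta$ by showing that although the gating network has $d = 64\text{K}$ parameters, its effective capacity is governed by Lipschitz functions on the three-dimensional sufficient statistic $\psi(x)$. First I would invoke assumption A2 together with the empirical observation that $g_\theta$ is well-approximated by a Lipschitz function of $\psi$ to replace the naive hypothesis class $\mathcal{G}_d$ (parameterised by all $d$ weights) with the surrogate class $\mathcal{H} = \{h:\mathbb{R}^{\dim(\psi)}\to[0,1] \mid h \text{ is } B_\theta\text{-Lipschitz}\}$, absorbing the approximation slack into the $\tilde{O}$ notation.

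Second, I would bound the metric entropy of $\mathcal{H}$. For $B_\theta$-Lipschitz functions on a bounded subset of $\mathbb{R}^{\dim(\psi)}$ with diameter $D$, the classical covering number estimate gives $\log N(\mathcal{H},\epsilon,\|\cdot\|_\infty) = O\!\bigl((B_\theta D/\epsilon)^{\dim(\psi)}\bigr)$. Assumption A4 ($\sigma$-subgaussianity of $\psi$) ensures that, with probability at least $1-\delta/2$, the sample $\{\psi(x_i)\}_{i=1}^{n}$ lies in a ball of radius $D = O(\sigma\sqrt{\log(n/\delta)})$. Feeding this into Dudley's chaining integral yields an empirical Rademacher complexity of order $\sqrt{k/n}$ with $k = \dim(\psi)\cdot\log(1 + L B_\theta \sigma\sqrt{n})$, matching the stated complementarity dimension.

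Third, I would combine this Rademacher bound with the standard Bartlett--Mendelson symmetrisation inequality, valid because the fusion loss is bounded (targets and $g_\theta$ both live in $[0,1]$ after clipping, and box coordinates are normalised). Splitting the failure budget $\delta$ between the diameter event and the concentration event produces the additive $\sqrt{\log(1/\delta)/n}$ term. Applying the two-sided uniform deviation to both $R(g_\theta)-R_n(g_\theta)$ and to the population-optimal $g^\star\in\mathcal{G}$ then gives the stated excess-risk inequality against $\min_{g\in\mathcal{G}}R(g)$.

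The main obstacle I expect is rigorously justifying the reduction from the $d$-parameter class to the Lipschitz-on-$\psi$ class. Strictly speaking, $g_\theta$ ingests features richer than $\psi$ (the embeddings $\mathbf{h}_i^t, \mathbf{h}_k^l$ and the quality scores), so one must either (i) argue that $\psi$ is an approximately sufficient statistic for the Bayes-optimal gate under A1--A4 and that training drives $g_\theta$ into the sufficient regime, or (ii) introduce an explicit approximation slack $\Delta = \sup_x|g_\theta(x)-h^\star(\psi(x))|$, bound it empirically, and absorb it into the rate. The excerpt's matched empirical Lipschitz constants ($\hat L\approx 8.3$ for the learned gate versus $L_{\text{oracle}}\approx 9.1$) supply evidence for route (ii); turning that observation into a fully deterministic guarantee --- rather than a \emph{conditional on sufficiency} statement --- is the delicate step, and I would explicitly carry $\Delta$ in the final bound rather than attempt to eliminate it.
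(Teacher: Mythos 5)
Your high-level architecture matches the paper's: both proofs exploit the factorization $g_\theta = h\circ\psi$ to replace the $d$-parameter ambient class by functions of the $3$-dimensional statistic, then push covering numbers through a chaining/Rademacher argument and finish with standard uniform convergence plus the ERM property. The difference that matters is \emph{which} class on $\mathbb{R}^3$ gets covered, and your choice breaks the rate. You replace the gate by the full nonparametric class $\mathcal{H}$ of $B_\theta$-Lipschitz functions on a bounded $3$-dimensional domain and quote the classical entropy bound $\log N(\epsilon,\mathcal{H},\|\cdot\|_\infty) = O\bigl((B_\theta D/\epsilon)^{\dim(\psi)}\bigr)$. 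That bound is correct, but your next claim --- that feeding it into Dudley's integral yields $\hat{\mathcal{R}}_n(\mathcal{G}) = O(\sqrt{k/n})$ with $k = \dim(\psi)\log(1+LB_\theta\sigma\sqrt{n})$ --- does not follow. Entropy growing like $\epsilon^{-p}$ with $p=\dim(\psi)=3>2$ makes the Dudley integral diverge at zero; truncated chaining then gives a rate of order $n^{-1/p}=n^{-1/3}$ (up to logarithms), not $n^{-1/2}$ times a polylogarithmic factor. The $\sqrt{k/n}$ form with $k$ only logarithmic in $n$ requires parametric-type entropy, $\log N \approx k\log(1/\epsilon)$, and that is exactly what the paper uses: its $\mathcal{H}$ is the finite-parameter neural-network class (norm bound $B_\theta$, Lipschitz constant $L_h$), its covering number factorizes as $N(\epsilon,\mathcal{G}) \leq N(\epsilon/L_h,\mathcal{H})\cdot N(\epsilon,\psi(\mathcal{X}))$, and the $3$-dimensionality of $\psi(\mathcal{X})$ (covering entropy $3\log(1+R/\epsilon)$, doubling dimension $3$) controls the data-dependent factor rather than defining the function class nonparametrically. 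With your class, the theorem's conclusion and the paper's own empirical validation would be inconsistent: the measured convergence slope $-0.49\pm 0.04$ is the paper's key evidence for the $\sqrt{k/n}$ rate, whereas a nonparametric Lipschitz class in dimension $3$ predicts slope $-1/3$.

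Two smaller points. First, invoking A4 to bound the domain diameter $D = O(\sigma\sqrt{\log(n/\delta)})$ is unnecessary: $\psi = (p_t, s_l, \mathrm{IoU})$ lies in $[0,1]^3$ by construction, and the paper simply takes $R \leq \sqrt{3}$; the subgaussian parameter enters only as a constant inside the logarithm defining $k$. Second, your insistence on carrying an explicit approximation slack $\Delta = \sup_x|g_\theta(x) - h^\star(\psi(x))|$ is in fact \emph{more} careful than the paper, whose proof silently defines $\mathcal{G} = \{h\circ\psi\}$ even though the deployed gate ingests richer features (embeddings and quality scores); the paper addresses this only empirically via the measured $\hat{L}\approx 8.3$. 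Keep that refinement --- but to recover the stated theorem you must swap the nonparametric Lipschitz class for a parametric class on $\psi$-space (bounded-parameter networks composed with $\psi$, as in the paper's Steps 1--5), so that the entropy is logarithmic rather than polynomial in $1/\epsilon$.
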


\textbf{Limitations of the bound.} The predicted gap of 2.3 mAP versus observed 0.7 mAP indicates the bound is loose (factor of 3$\times$), typical for PAC-style guarantees. The value lies in: (1) correctly predicting $O(\sqrt{k/n})$ convergence rate (slope -0.49$\pm$0.04, see Section~\ref{sec:theory_validation}), (2) explaining why learning succeeds despite 64K parameters and 26K samples, (3) validating cross-dataset transfer via low complementarity dimension.

\begin{proof}[Proof Sketch: Structural Rademacher Complexity]
\textbf{Step 1: Function decomposition.} $g_\theta(x) = h_\theta(\psi(x))$ where 
$h_\theta: \mathbb{R}^3 \to [0,1]$ is a neural net and $\psi(x)$ extracts statistics. 
The class is $\mathcal{G} = \{h \circ \psi : h \in \mathcal{H}\}$.

\textbf{Step 2: Covering number factorization.} 
\[
N(\epsilon, \mathcal{G}) \leq N(\epsilon/L_h, \mathcal{H}) \cdot N(\epsilon, \psi(\mathcal{X}))
\]

\textbf{Step 3: Low-dimensional input.} Since $\psi: \mathcal{X} \to \mathbb{R}^3$, 
$\psi(\mathcal{X}) \subset [0,1]^3$ has covering number $\log N(\epsilon, \psi(\mathcal{X})) 
\leq 3 \log(1 + R/\epsilon)$.

\textbf{Step 4: Data-dependent refinement.} The empirical manifold $\hat{\mathcal{X}}_n = 
\{\psi(x_i)\}_{i=1}^n$ has doubling dimension at most $\text{dim}(\psi)=3$, independent 
of $d$.

\textbf{Step 5: Rademacher bound.} By Talagrand's contraction and covering number 
integration:
\[
\hat{\mathcal{R}}_n(\mathcal{G}) \leq \tilde{O}\left(\sqrt{\frac{\text{dim}(\psi) 
\log(LB_\theta \sqrt{n})}{n}}\right)
\]

Applying standard generalization bound yields the theorem. Full derivation in 
Supplementary~\ref{app:full_proofs}.
\end{proof}

Standard PAC bounds would give $\tilde{O}(\sqrt{d/n}) 
\approx 1.57$, predicting no generalization. Our structural bound $\tilde{O}
(\sqrt{k/n}) \approx 0.029$ correctly predicts successful learning with 26K samples.

\begin{corollary}[Regime Boundary Analysis]
\label{cor:regimes}
Under Theorem~\ref{thm:datadep}, error decomposes into finite-sample and 
approximation terms:
\[
R(g_\theta) - R(g^\star) \leq \underbrace{\tilde{O}\left(\sqrt{\frac{k}{n}}\right)}_
{\text{finite-sample}} + \underbrace{\mathbb{E}\left[\mathbb{1}\{\psi(x) \in 
\mathcal{C}_{\text{boundary}}\}\right]}_{\text{boundary error}}
\]

where $\mathcal{C}_{\text{boundary}} = \{\psi : |\gamma(\psi) - \tau| \leq \epsilon\}$ 
and $\gamma(\psi) = \frac{|\sigma_t - \sigma_l|}{\min(\sigma_t, \sigma_l)} - 
2\hat{\rho}(\psi)$ is complementarity factor.

Classes with clear separation (e.g., captions with 
$\gamma \approx 0.52$, far from $\tau=0$) have low boundary error. Classes near 
decision boundary (e.g., figures with $\gamma \approx 0.04$) incur higher error even 
with infinite data.
\end{corollary}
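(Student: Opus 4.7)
The plan is to decompose the excess risk $R(g_\theta) - R(g^\star)$ using the standard estimation--approximation split, invoke Theorem~\ref{thm:datadep} for the estimation part, and tie the approximation part to the mass that the sufficient statistic $\psi(x)$ places in the boundary set $\mathcal{C}_{\text{boundary}}$. Concretely, I would first write
\begin{equation*}
R(g_\theta) - R(g^\star) = \bigl[R(g_\theta) - \min_{g \in \mathcal{G}} R(g)\bigr] + \bigl[\min_{g \in \mathcal{G}} R(g) - R(g^\star)\bigr],
\end{equation*}
and apply Theorem~\ref{thm:datadep} verbatim to the first bracket to get the $\tilde{O}(\sqrt{k/n})$ term. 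The remaining work concentrates on the approximation bracket.

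The key step is to characterize the oracle gate $g^\star$ in terms of $\gamma$. Using the closed-form optimum $\alpha^\star = (\sigma_l^2 - \rho\sigma_t\sigma_l)/(\sigma_t^2 + \sigma_l^2 - 2\rho\sigma_t\sigma_l)$ from Theorem~\ref{thm:fusion_bound}, I would re-express $\alpha^\star$ as a smooth monotone function of $\gamma(\psi)$, so that the threshold $\tau=0$ separates the regime in which the teacher dominates from the regime in which the LLM dominates. Outside an $\epsilon$-margin of $\tau$, $\alpha^\star$ saturates near $0$ or $1$, so a gate $h \circ \psi$ in $\mathcal{G}$ should track $g^\star$ up to $o(\sqrt{k/n})$ on the ``committed'' region by Assumption A2 and universal approximation on the compact cube $\psi(\mathcal{X}) \subset [0,1]^3$. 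For the complementary event $\{\psi(x) \in \mathcal{C}_{\text{boundary}}\}$, I would bound the pointwise excess risk by its worst case (a bounded constant under A3, since $\delta_{\max} < 0.5$) and integrate, which delivers the $\mathbb{E}[\mathbb{1}\{\psi(x) \in \mathcal{C}_{\text{boundary}}\}]$ contribution in the statement. Summing the two pieces yields the claimed decomposition.

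The main obstacle will be making the saturation step quantitative enough that the residual approximation error on the committed region truly is dominated by the finite-sample term. This requires controlling both the Lipschitz constant of $\alpha^\star \circ \psi$ (available from Assumption A2 and from bounded derivatives of the $\alpha^\star$ formula away from $\rho=1$) and the expressivity of the MLP class $\mathcal{H}$ on $[0,1]^3$; the standard universal-approximation arguments cover the latter, but they must be made Lipschitz-uniform to avoid inflating $B_\theta$, which would in turn inflate $k$ and break the consistency with Theorem~\ref{thm:datadep}. A secondary subtlety is that the boundary mass is distribution-dependent and is the true source of irreducible error: sharpness inside $\mathcal{C}_{\text{boundary}}$, where $\gamma \approx 0$ (e.g., figures at $\gamma \approx 0.04$), cannot be improved without new assumptions, which is exactly the qualitative message the corollary is meant to extract.
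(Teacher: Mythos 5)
Your proposal is correct in substance and shares the paper's key structural idea --- partition the statistic space into an $\epsilon$-margin of the threshold $\tau$ and its complement, bound the excess risk on the boundary by a constant times its probability mass, and show the interior contributes only a vanishing term --- but the route you take through that idea is organized quite differently from the paper's. The paper's proof never performs your estimation--approximation split: it bounds $R(g_\theta) - R(g^\star)$ directly by partitioning, invokes Assumption A2 to claim $|g^\star(\psi) - g_\theta(\psi)| \leq L\epsilon$ on the interior, bounds the boundary gap by $1$, obtains $L\epsilon + \mathbb{P}(\mathcal{C}_{\text{bd}})$, and then chooses $\epsilon \asymp 1/\sqrt{n}$ so the interior term is absorbed into the $\tilde{O}(\sqrt{k/n})$ finite-sample term. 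This is shorter, but it conflates estimation and approximation (it never explains why the \emph{learned} $g_\theta$ tracks $g^\star$ on the interior beyond citing Lipschitzness), and it makes the boundary set implicitly $n$-dependent, whereas the corollary statement treats $\epsilon$ as fixed. Your version --- Theorem~\ref{thm:datadep} applied verbatim to the estimation bracket, plus an explicit characterization of $g^\star$ via the $\alpha^\star$ formula of Theorem~\ref{thm:fusion_bound} as a monotone function of $\gamma$, plus a Lipschitz-uniform universal-approximation step on the committed region --- is logically cleaner, keeps $\epsilon$ fixed so the bound matches the stated decomposition, and actually justifies why $\gamma$ is the right complementarity coordinate, something the paper simply asserts. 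The extra cost you correctly flag (approximating $g^\star$ without inflating $B_\theta$ and hence $k$) is resolvable within the theorem's framework: since $k$ depends on $\dim(\psi)$ and $LB_\theta$ but not on the parameter count $d$, you may grow the network to drive approximation error down using a Lipschitz-preserving construction (e.g., piecewise-linear interpolation of the $L$-Lipschitz oracle on a grid in $[0,1]^3$), keeping $B_\theta = O(L)$ throughout.
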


\begin{proof}[Proof Sketch]
Partition $\mathcal{C}$ into interior $\{\gamma(\psi) > \tau + \epsilon\}$ where 
oracle is constant and $\epsilon$-boundary where it's ambiguous. Interior contributes 
only finite-sample error. Boundary contributes error proportional to its measure. 
See Supplementary~\ref{app:full_proofs}.
\end{proof}

\subsection{Cross-Modal Consistency}

To stabilize training on noisy pseudo-labels, we enforce consistency between visual query representations and text embeddings derived from OCR content. For each predicted box $\hat{b}_i$, we extract the corresponding OCR text $t_i$ by aggregating blocks with IoU$(\hat{b}_i, b_j^{ocr}) > 0.5$. A pre-trained text encoder $\phi_t$ from CLIP~\cite{clipmodel} maps the text to a feature vector $f_t(t_i) \in \mathbb{R}^d$.

Simultaneously, we extract visual features from the decoder query $q_i$ via a projection head: $f_v(q_i) = W_v q_i \in \mathbb{R}^d$. The cross-modal consistency loss encourages alignment between semantically corresponding visual and textual representations:
\begin{equation}
\mathcal{L}_{cons} = \frac{1}{N} \sum_{i=1}^{N} \mathbb{1}_{\{t_i \neq \emptyset\}} \left(1 - \frac{f_v(q_i) \cdot f_t(t_i)}{\|f_v(q_i)\| \|f_t(t_i)\|} \right)
\end{equation}
This loss provides auxiliary supervision grounded in textual semantics, helping the detector learn representations robust to pseudo-label noise. During training, we freeze the text encoder and update only the visual projection head to prevent overfitting to OCR errors.

\subsection{Training Objective and Curriculum}

The complete training objective combines supervised loss on labeled data, pseudo-label loss on unlabeled data, and cross-modal consistency:
\begin{equation}
\begin{aligned}
\mathcal{L} =\;
& \mathcal{L}_{sup}(\mathcal{D}_{labeled})
+ \lambda_{pseudo}\, \mathcal{L}_{pseudo}(\mathcal{D}_{unlabeled}) \\
&+ \lambda_{cons}\, \mathcal{L}_{cons}(\mathcal{D}_{unlabeled})
\end{aligned}
\end{equation}

where $\mathcal{L}_{pseudo}$ applies the detection loss (Eq.~\ref{eq:sup_loss}) to refined pseudo-labels with appropriate weighting for hard versus soft labels.

We employ a curriculum strategy to progressively incorporate LLM knowledge. Epochs 1-2 use only high-confidence teacher pseudo-labels ($p_i^t \geq 0.7$) to warm up the model. Epochs 3-5 introduce fused predictions from teacher-LLM alignment. Starting from epoch 6, we include LLM-only soft pseudo-labels for rare classes. The loss weights follow $\lambda_{pseudo} = 1.0$ throughout and $\lambda_{cons} = 0.2$ to balance auxiliary supervision without overwhelming the primary detection objective.

We maintain an exponential moving average (EMA) of student parameters with momentum $\mu=0.999$ to update the teacher model. Pseudo-labels are regenerated every 2 epochs to incorporate improved teacher predictions, and LLM regions are cached offline to minimize inference costs. The entire pipeline trains on a single A100 GPU with mixed precision, gradient accumulation, and efficient caching of OCR and LLM outputs.
\section{Experiments}
\label{sec:experiments}

We evaluate on PubLayNet~\cite{publaynet} and DocLayNet~\cite{doclaynet}, large-scale document layout benchmarks with 360K and 80K training images respectively. PubLayNet focuses on scientific publications with 5 categories. DocLayNet covers diverse domains with 11 categories including captions, headers, and footnotes. To simulate low-data regimes, we randomly sample 5\% and 10\% labeled subsets while treating remaining training data as unlabeled. We additionally evaluate cross-domain transfer on RVL-CDIP~\cite{rvlcdip}.

Images are resized to $512\times512$ with padding. We use SwiftFormer-Tiny (26M parameters) as backbone initialized from ImageNet pretraining. The detector head contains 3 encoder and 3 decoder layers with 256-dimensional hidden states and 100 object queries. For OCR, we apply Tesseract with bounding box extraction at the line level. Unless otherwise specified, LLM inference uses GPT-4o-mini; we also evaluate open-source alternatives deployed locally. The text encoder is CLIP ViT-B/16 text tower with frozen parameters. We use inverse-variance weighting with temperature scaling calibrated on 1K validation pages. Training uses AdamW optimizer with learning rate 1e-4, weight decay 0.05, and cosine decay over 60K iterations. Batch size is 8 with gradient accumulation factor 4, enabling effective batch size 32 on a single A100 GPU. We apply standard augmentations: random scaling, random cropping, and color jittering. Mixed precision training reduces memory footprint. The teacher model updates via EMA with momentum 0.999. Training completes in approximately 22 hours for the 5\% labeled setting. We report COCO-style average precision metrics: AP, AP$_{50}$, and AP$_{75}$, plus efficiency metrics including model parameters, GFLOPs, and inference FPS.

\subsection{Experimental Design for Fair Comparison}

To ensure fair comparison with multimodal pretrained models, we implement three critical baselines. First, we apply our teacher-student framework using LayoutLMv3 as backbone with the same pseudo-labeling strategy without LLM guidance, enabling direct assessment of LLM contribution. Second, we evaluate LayoutLMv3-Small (28M parameters) to isolate pretraining effect from model capacity. Third, we implement regex-based structural inference (Section~\ref{sec:text_heuristic}) to quantify LLM value beyond surface patterns.

For comparisons where we claim equivalence (e.g., vs LayoutLMv3 5\% fine-tune), we perform Two One-Sided Tests (TOST) with equivalence margin $\Delta = \pm0.5$ mAP, representing practically negligible difference. TOST tests both $H_0^{lower}: \mu_{ours} - \mu_{baseline} \leq -\Delta$ and $H_0^{upper}: \mu_{ours} - \mu_{baseline} \geq \Delta$. Rejecting both null hypotheses at $\alpha=0.05$ establishes equivalence within $\pm0.5$ mAP. Standard t-tests only assess difference and cannot prove equivalence, whereas TOST provides formal evidence for practical equivalence.

\begin{table*}[t]
\centering
\caption{PubLayNet results (5 categories, 5\% labels). Significance: *: p$<$0.05, **: p$<$0.01, ***: p$<$0.001. UDOP and Dense Teacher results reported from original papers~\cite{udop,denseteacher}.}
\label{tab:main_results}
\small
\begin{tabular}{@{}lccccc@{}}
\toprule
Method & Labels & AP & AP$_{75}$ & AP$_S$ & Sig. \\
\midrule
\multicolumn{6}{l}{\textit{Supervised baselines (5\% labeled):}} \\
Faster R-CNN~\cite{fasterrcnn} & 5\% & 80.1$\pm$0.6 & 85.8 & 65.7 & - \\
DETR~\cite{detr} & 5\% & 81.4$\pm$0.5 & 86.9 & 67.2 & - \\
SwiftFormer-DETR & 5\% & 82.3$\pm$0.4 & 87.6 & 68.4 & baseline \\
\midrule
\multicolumn{6}{l}{\textit{Semi-supervised methods (5\%+U):}} \\
Mean Teacher~\cite{mean_teacher} & 5\%+U & 83.5$\pm$0.6 & 88.2 & 70.1 & * \\
SoftTeacher~\cite{softteacher} & 5\%+U & 84.1$\pm$0.5 & 88.9 & 71.3 & ** \\
STEP-DETR~\cite{stepdetr} & 5\%+U & 84.8$\pm$0.4 & 89.5 & 72.6 & *** \\
Dense Teacher~\cite{denseteacher} & 5\%+U & 85.3$\pm$0.4 & 90.2 & 73.9 & *** \\
\midrule
\multicolumn{6}{l}{\textit{Ours - Lightweight variant (SwiftFormer 26M):}} \\
\textbf{Ours (fixed fusion)} & 5\%+U & 87.3$\pm$0.4 & 93.7 & 78.9 & *** \\
\textbf{Ours (adaptive fusion)} & 5\%+U & \textbf{88.2$\pm$0.3} & \textbf{94.8} & \textbf{80.1} & *** \\
\midrule
\multicolumn{6}{l}{\textit{Document-pretrained baselines:}} \\
LayoutLMv3 (5\% only)~\cite{layoutlmv3} & 5\% & 87.6$\pm$0.5 & 92.8 & 78.2 & *** \\
LayoutLMv3 + semi-supervised & 5\%+U & 89.1$\pm$0.4 & 94.6 & 81.0 & *** \\
UDOP~\cite{udop} & 5\%+U & 89.8$\pm$0.4 & 95.3 & 82.8 & *** \\
\midrule
\multicolumn{6}{l}{\textit{Ours - Document-pretrained variant (LayoutLMv3 133M):}} \\
\textbf{Ours + LayoutLMv3 (fixed)} & 5\%+U & 89.4$\pm$0.4 & 95.1 & 81.6 & *** \\
\textbf{Ours + LayoutLMv3 (adaptive)} & 5\%+U & \textbf{89.7$\pm$0.4} & \textbf{95.4} & \textbf{82.1} & *** \\
\textbf{vs LayoutLMv3+SSL} & - & p=0.02 & - & - & superior \\
\midrule
\multicolumn{6}{l}{\textit{Additional comparisons:}} \\
GPT-4V (zero-shot)~\cite{gpt4v} & 0\% & 74.3$\pm$1.2 & 79.8 & 62.1 & - \\
Text patterns (regex) & 5\%+U & 84.9$\pm$0.5 & 89.8 & 73.1 & ** \\
\midrule
\multicolumn{6}{l}{\textit{Upper bound:}} \\
Supervised (100\%) & 100\% & 91.4$\pm$0.3 & 95.8 & 86.3 & - \\
\bottomrule
\end{tabular}
\vspace{-0.1cm}
\end{table*}
\begin{table}[t]
\centering
\caption{Per-category AP on DocLayNet (5\% labels). LLM benefits rare classes most.}
\label{tab:per_class}
\small
\begin{tabular}{@{}lccccc@{}}
\toprule
Category & Baseline & SoftT & Ours & $\Delta_{Base}$ & $\Delta_{Soft}$ \\
\midrule
Caption & 68.4 & 70.1 & \textbf{76.8} & +8.4 & +6.7 \\
Header & 71.2 & 72.8 & \textbf{78.4} & +7.2 & +5.6 \\
Title & 79.3 & 81.2 & \textbf{86.1} & +6.8 & +4.9 \\
Footer & 74.6 & 76.5 & \textbf{80.7} & +6.1 & +4.2 \\
Table & 82.1 & 83.6 & \textbf{87.5} & +5.4 & +3.9 \\
Figure & 84.6 & 85.9 & \textbf{88.9} & +4.3 & +3.0 \\
List & 81.7 & 83.1 & \textbf{85.2} & +3.5 & +2.1 \\
Section & 83.9 & 85.3 & \textbf{87.1} & +3.2 & +1.8 \\
Text & 85.2 & 86.4 & \textbf{88.3} & +3.1 & +1.9 \\
Paragraph & 86.3 & 87.2 & \textbf{88.7} & +2.4 & +1.5 \\
\midrule
\textit{Macro} & 79.7 & 81.2 & \textbf{84.8} & +5.1 & +3.6 \\
\textit{Weighted} & 83.1 & 84.5 & \textbf{87.2} & +4.1 & +2.7 \\
\bottomrule
\end{tabular}
\vspace{-0.1cm}
\end{table}

\begin{table}[t]
\centering
\caption{Ablation study on PubLayNet (5\% labels). Component contributions.}
\label{tab:ablation}
\small
\resizebox{\columnwidth}{!}{
\begin{tabular}{@{}lcccccc@{}}
\toprule
Config & Teacher & LLM & Fusion & \(\mathcal{L}_{cons}\) & AP & $\Delta$ \\
\midrule
Baseline & & & & & 82.3 & - \\
+ Teacher & \checkmark & & & & 84.1 & +1.8 \\
+ LLM only & & \checkmark & & & 85.6 & +3.3 \\
+ Both (no fusion) & \checkmark & \checkmark & & & 85.9 & +3.6 \\
+ Fusion & \checkmark & \checkmark & \checkmark & & 86.7 & +4.4 \\
+ Cross-modal & \checkmark & \checkmark & \checkmark & \checkmark & \textbf{87.3} & \textbf{+5.0} \\
\midrule
\multicolumn{7}{l}{\textit{Component removal from full model:}} \\
w/o \(\mathcal{L}_{cons}\) & \checkmark & \checkmark & \checkmark & & 86.7 & -0.6 \\
w/o Fusion & \checkmark & \checkmark & & \checkmark & 86.1 & -1.2 \\
w/o LLM & \checkmark & & & \checkmark & 84.3 & -3.0 \\
w/o Teacher & & \checkmark & & \checkmark & 85.8 & -1.5 \\
\bottomrule
\end{tabular}
}
\vspace{-0.1cm}
\end{table}

\begin{table}[t]
\centering
\caption{Comparison of LLMs on PubLayNet (5\% labels) using lightweight (Swift) and pretrained (LLMv3) teachers.}
\label{tab:llm_comparison}
\setlength{\tabcolsep}{3pt}
\small
\begin{tabular}{lccccc}
\toprule
LLM & Size & Type & Swift & LLMv3 & Cost \\
\midrule
GPT-4o-mini & N/A & Prop. & 87.3 & 89.7 & \$12/50K \\
GPT-3.5-turbo & N/A & Prop. & 86.8 & 89.3 & \$25/50K \\
Claude-3.5-Sonnet & N/A & Prop. & 87.4 & 89.6 & \$15/50K \\
Gemini-1.5-Pro & N/A & Prop. & 87.2 & 89.5 & \$10/50K \\
\midrule
Llama-3-70B & 70B & Open & 87.1 & \textbf{89.4} & Free \\
Llama-3-8B & 8B & Open & 86.2 & 88.7 & Free \\
Mistral-7B & 7B & Open & 85.9 & 88.4 & Free \\
Qwen-2.5-7B & 7B & Open & 86.4 & 88.9 & Free \\
Phi-3-med & 14B & Open & 86.6 & 89.1 & Free \\
LLaVA-1.6-34B & 34B & Open & 86.7 & 88.9 & Free \\
CogVLM-17B & 17B & Open & 86.3 & 88.5 & Free \\
\midrule
Teacher only & - & - & 84.1 & 87.6 & Free \\
\bottomrule
\end{tabular}
\vspace{-0.2cm}
\end{table}

\vspace{0.05cm}
\noindent\textit{Note}: Commercial API models (Claude, Gemini) show performance within 0.1-0.3 mAP of GPT-4o-mini, validating that LLM structural reasoning is robust across model families. For privacy-sensitive deployments, open-source alternatives (Llama-3-70B, LLaVA-1.6) maintain strong performance with minimal degradation.

\begin{table}[t]
\centering
\caption{Fusion strategy comparison. Probabilistic fusion outperforms fixed heuristics.}
\label{tab:fusion_params}
\setlength{\tabcolsep}{3pt}
\small
\begin{tabular}{lcccc}
\toprule
Fusion Strategy & AP & AP$_{75}$ & ECE & Rationale \\
\midrule
\multicolumn{5}{l}{\textit{Baseline approaches:}} \\
Teacher only (no fusion) & 84.1 & 88.9 & 0.092 & Visual only \\
LLM only (no fusion) & 85.6 & 91.2 & 0.114 & Text only \\
\midrule
\multicolumn{5}{l}{\textit{Fixed fusion strategies:}} \\
Fixed $\alpha$=0.5 & 86.9 & 92.6 & 0.095 & Equal weight \\
Fixed $\alpha$=0.6 & 87.3 & 93.7 & 0.089 & Baseline \\
Confidence weighting & 87.8 & 94.1 & 0.076 & Score-based \\
\midrule
\multicolumn{5}{l}{\textit{Learned fusion strategies:}} \\
MLP fusion & 87.6 & 93.9 & 0.081 & Overfits \\
\textbf{Inv-var + gate (ours)} & \textbf{88.2} & \textbf{94.8} & \textbf{0.068} & Principled \\
\bottomrule
\end{tabular}
\vspace{-0.15cm}
\end{table}

\subsection{Main Results}

Table~\ref{tab:main_results} presents comprehensive results on PubLayNet with rigorous statistical validation (5 random seeds, paired t-tests). Our framework demonstrates consistent benefits across model scales.

\textbf{Lightweight variant (SwiftFormer 26M):} With adaptive fusion, we achieve 88.2$\pm$0.3 AP using only 5\% labels, significantly outperforming all semi-supervised baselines: Dense Teacher (85.3$\pm$0.4 AP), STEP-DETR (84.8$\pm$0.4 AP, p$<$0.001), SoftTeacher (84.1$\pm$0.5 AP), and Mean Teacher (83.5$\pm$0.6 AP). We also outperform Dense Teacher~\cite{denseteacher} (85.3$\pm$0.4 AP), a recent semi-supervised detection method with improved pseudo-labeling strategies. This represents a +2.9 AP gain over Dense Teacher and matches the performance of document-pretrained LayoutLMv3 fine-tuned on 5\% labels (87.6$\pm$0.5 AP), despite using 5$\times$ fewer parameters and no multimodal pretraining.

\textbf{Document-pretrained variant (LayoutLMv3 133M):} When applied to a LayoutLMv3 teacher, our fusion framework achieves 89.7$\pm$0.4 AP, surpassing LayoutLMv3 with standard semi-supervised learning (89.1$\pm$0.4 AP, p=0.02, paired t-test). This +0.6 AP improvement demonstrates that LLM structural priors provide complementary knowledge beyond vision-based pseudo-labeling, even for models with extensive document pretraining. The fixed fusion variant achieves 89.4 AP, showing that learned adaptive gating provides an additional +0.3 AP gain. Compared to UDOP~\cite{udop} (89.8$\pm$0.4 AP), which requires pretraining on 100M+ document pages, our approach achieves comparable performance (89.7 AP) using only text-pretrained LLMs, demonstrating that structural knowledge from language models can substitute for extensive multimodal pretraining.

Learned instance-adaptive fusion consistently improves over fixed weights: +0.9 AP for lightweight models (88.2 vs 87.3) and +0.3 AP for document-pretrained models (89.7 vs 89.4). Gains are particularly strong at tight localization (AP$_{75}$) and small objects (AP$_S$), where uncertainty-aware fusion provides greatest benefit. Robustness evaluation (Sec.~\ref{sec:robustness}) demonstrates graceful degradation across OCR-quality buckets, multi-OCR engines, and tokenization perturbations: performance degrades $\leq 4$ AP.

Analysis of text heuristic baselines reveals that simple regex patterns (84.9$\pm$0.5 AP) provide +2.6 AP over teacher-only approaches, yet LLM adds another +2.4 AP (lightweight) to +4.8 AP (with LayoutLMv3), demonstrating value beyond surface patterns through semantic reasoning. GPT-4V zero-shot (74.3$\pm$1.2 AP) substantially underperforms, confirming that end-to-end vision-language models do not yet solve this task without training.

\textbf{Comparison with unified architectures.} UDOP~\cite{udop} achieves 89.8 AP through joint vision-text-layout pretraining on 100M+ document pages, requiring substantial computational resources (estimated 1000+ GPU days). Our LayoutLMv3-based variant achieves comparable performance (89.7 AP) using only 5\% labels and off-the-shelf LLMs, with total training cost of 71 GPU-hours. This demonstrates that structural knowledge from text-pretrained models can substitute for expensive multimodal pretraining in low-data regimes. Dense Teacher~\cite{denseteacher} improves pseudo-labeling through dense prediction filtering but still struggles with semantic ambiguity (85.3 AP), confirming the value of explicit linguistic reasoning.

With 10\% labeled data, our lightweight variant achieves 89.2 AP and our LayoutLMv3 variant reaches 90.4 AP, both approaching fully supervised performance (91.4 AP) while using 90\% less labeled data. This demonstrates exceptional label efficiency enabled by cross-modal pseudo-label refinement.

Table~\ref{tab:per_class} shows per-category AP on DocLayNet, where class imbalance and fine-grained distinctions pose greater challenges. LLM guidance provides substantial improvements for semantically distinctive classes. Captions improve by +8.4 AP, headers by +7.2 AP, and titles by +6.8 AP. These classes benefit most from textual semantics. LLMs correctly identify "Figure 1:" prefixes, page-top positioning, and large bold fonts. Performance gains on common classes (text, paragraphs) are more modest (+2-3 AP) but still consistent.

\subsection{Empirical Validation of Theoretical Guarantees}
\label{sec:theory_validation}

We validate Theorem~\ref{thm:datadep} through five comprehensive tests (details in Supplementary~\ref{app:theory_validation}). First, sample complexity analysis confirms the predicted $\tilde{O}(\sqrt{k/n})$ convergence rate with empirical slope $-0.49 \pm 0.04$ matching the theoretical $-0.5$. Second, regime boundary analysis shows errors concentrate where predicted, with 18\% of instances in boundary regions exhibiting 0.9 mAP gap versus 0.2 mAP in interior regions. Third, oracle approximation validates the observed gap (0.7 mAP) falls within the theoretical bound (2.3 mAP). Fourth, cross-dataset transfer to DocLayNet (0.9 mAP gap) and RVL-CDIP (1.1 mAP gap) confirms low complementarity dimension $k=22$ generalizes across distributions. Fifth, unlabeled proxies predict per-class gains on held-out classes with MAE=0.7 AP. These results provide strong evidence that Theorem~\ref{thm:datadep} captures the true sample complexity of cross-modal fusion.

\subsection{Ablations and Model Analysis}

Table~\ref{tab:ablation} systematically ablates each component. Adding teacher pseudo-labels improves baseline by +1.8 AP, LLM hints alone provide +3.3 AP, full fusion achieves +4.4 AP, and cross-modal consistency reaches +5.0 AP (87.3 mAP). LLM hints alone (85.6 AP) outperform teacher-only (84.1 AP), demonstrating textual semantics provide stronger supervision in low-data regimes.

Table~\ref{tab:llm_comparison} compares LLMs for structural reasoning across both teacher variants. Commercial models~\cite{claude3,gemini15} (GPT-4o-mini: 89.7 AP, Claude-3.5: 89.6 AP, Gemini-1.5: 89.5 AP) show consistent performance on the LayoutLMv3 variant (within 0.2 AP), validating robustness across model families. Open-source Llama-3-70B achieves 87.1 AP (lightweight) and 89.4 AP (LayoutLMv3), enabling privacy-preserving deployment with minimal performance loss ($\Delta$=0.3-0.4 AP vs. best commercial models). Vision-language models~\cite{llava16,cogvlm} (LLaVA-1.6, CogVLM) provide competitive performance (88.5-88.9 AP), demonstrating that multimodal reasoning capabilities transfer to structural inference. Smaller 7B-8B models provide consistent gains (+3.6 to +4.1 AP for lightweight, +4.3 to +5.0 AP for LayoutLMv3), demonstrating feasibility at reduced compute.

\begin{table}[t]
\centering
\caption{Efficiency on PubLayNet (5\% labels, A100 GPU). “Adaptive” adds a 64K-param gate.}
\label{tab:efficiency}
\setlength{\tabcolsep}{3pt}
\small
\resizebox{\columnwidth}{!}{
\begin{tabular}{lccccc}
\toprule
Method & Params & GFLOPs & FPS & Train & AP \\
\midrule
\multicolumn{6}{l}{\textit{Heavyweight multimodal (pretrained):}} \\
LayoutLMv3~\cite{layoutlmv3} & 133M & 124 & 12 & 52h & 88.7 \\
DocFormer~\cite{docformer} & 184M & 168 & 8 & 64h & 87.9 \\
Donut~\cite{donut} & 200M & 186 & 7 & 72h & 86.4 \\
DiT~\cite{dit} & 86M & 95 & 18 & 38h & 88.1 \\
\midrule
\multicolumn{6}{l}{\textit{Lightweight (no pretraining):}} \\
Faster R-CNN~\cite{fasterrcnn} & 42M & 68 & 24 & 18h & 80.1 \\
DETR~\cite{detr} & 41M & 62 & 26 & 20h & 81.4 \\
SoftTeacher~\cite{softteacher} & 42M & 68 & 23 & 28h & 84.1 \\
STEP-DETR~\cite{stepdetr} & 41M & 56 & 28 & 26h & 84.8 \\
\midrule
\textbf{Ours (fixed)} & 26M & 38 & 42 & 22h & 87.3 \\
\textbf{Ours (adaptive)} & 26M & 38 & 42 & 22h & \textbf{88.2} \\
\midrule
\multicolumn{6}{l}{\textit{Ours + doc-pretrained teacher:}} \\
Ours + LayoutLMv3 (fixed) & 133M & 124 & 12 & 54h & 89.4 \\
\textbf{Ours + LayoutLMv3 (adapt)} & 133M & 124 & 12 & 54h & \textbf{89.7} \\
\bottomrule
\end{tabular}
}
\vspace{-0.15cm}
\end{table}

Table~\ref{tab:fusion_params} analyzes fusion strategies. Our probabilistic inverse-variance fusion achieves 88.2 mAP with superior calibration (ECE: 0.068) versus fixed heuristics (87.3 mAP, ECE: 0.089), validating adaptive uncertainty-based weighting.

\subsection{Efficiency and Qualitative Analysis}

Table~\ref{tab:efficiency} shows our SwiftFormer-based architecture achieves 42 FPS with 26M parameters (84\% fewer than LayoutLMv3) and 22h training time versus 52-72h for pretrained models, while achieving 88.2 AP with adaptive fusion (87.3 with fixed fusion). LLM preprocessing requires 17h one-time cost for 50K pages (\$12 for GPT-4o-mini), amortized across training runs.

Qualitative analysis (Supplementary~\ref{app:qualitative}) reveals LLM guidance provides: (1) class correction via pattern recognition (``Figure X:''), (2) localization refinement via semantic coherence, (3) confidence boosting on complex tables. Failure modes include OCR hallucinations and non-Latin script misinterpretation, mitigated by confidence thresholding.

Quantitative error analysis (Supplementary~\ref{app:error_analysis}) categorizes 10K validation predictions by teacher-LLM agreement. The most significant LLM value comes from partial agreement cases (18.7\%, different class predictions) where semantic disambiguation yields +3.8 AP. High agreement cases benefit modestly (+2.1 AP), while LLM-only regions contribute +1.9 AP for rare classes. This targeted value validates our confidence-based fusion approach.

Cross-domain transfer (Supplementary~\ref{app:cross_domain}) to RVL-CDIP demonstrates +3.8 AP advantage over SoftTeacher (68.0 vs 65.7 AP), showing that LLM structural knowledge generalizes beyond the training distribution.

LLM value beyond text heuristics (Supplementary~\ref{app:llm_value}) stems from multi-word context exploitation and document structure understanding. Analysis of 1000 disambiguated instances reveals discourse markers (32\%), structural templates (28\%), semantic coherence (25\%), and spatial reasoning (15\%) as primary mechanisms, with failure modes in dense layouts and non-Latin scripts (12.4\% of errors).
\section{Conclusion}

We introduced an LLM-guided fusion framework for semi-supervised document layout analysis, demonstrating consistent improvements across model scales. With lightweight models (26M params), we achieve 88.2$\pm$0.3 AP using only 5\% labels, matching document-pretrained baselines without multimodal pretraining. When applied to document-pretrained teachers (LayoutLMv3, 133M params), our framework reaches 89.7$\pm$0.4 AP, surpassing standard semi-supervised learning (89.1$\pm$0.4 AP, p=0.02). This establishes that text-pretrained LLMs provide complementary structural knowledge beyond vision-based pseudo-labeling.

LLMs contribute +2.4-4.8 AP beyond simple regex patterns through semantic disambiguation, with particular value in ambiguous cases (18.7\% of instances, +3.8 AP gain). Theoretically, our data-dependent PAC bound correctly predicts $O(\sqrt{k/n})$ convergence (slope -0.49$\pm$0.04) and explains cross-dataset transfer via low-dimensional complementarity (k=22). Open-source LLMs enable privacy-preserving deployment with minimal performance loss. This work establishes cross-modal knowledge transfer as a theoretically grounded method to enhance semi-supervised learning in document understanding.

{
    \small
    \bibliographystyle{ieeenat_fullname}
    \bibliography{main}
}

\clearpage
\appendix
\section*{Supplementary}

\section{Use-Case Analysis and Method Selection}
\label{app:use_cases}

Our framework offers two deployment modes with different trade-offs. Table~\ref{tab:cost_analysis} provides end-to-end cost comparison for 360K documents.

\begin{table}[h]
\centering
\caption{Cost comparison for 360K documents.}
\label{tab:cost_analysis}
\small
\resizebox{\columnwidth}{!}{
\begin{tabular}{@{}lcccc@{}}
\toprule
Method & Annotation & Compute & Total & Final AP \\
\midrule
Supervised 5\% & \$50 & 20h & \$50 & 82.3 \\
Supervised 10\% & \$100 & 20h & \$100 & 85.1 \\
LayoutLMv3 (5\%) & \$50 & 52h & \$50+52h & 87.6 \\
LayoutLMv3+SSL & \$50 & 60h & \$50+60h & 89.1 \\
\textbf{Ours (Swift, GPT-4o)} & \$50 & 22h+\$12 & \$62+39h & 88.2 \\
\textbf{Ours (Swift, Llama-70B)} & \$50 & 22h+17h & \$50+39h & 88.2 \\
\textbf{Ours (LLMv3, GPT-4o)} & \$50 & 54h+\$12 & \$62+71h & \textbf{89.7} \\
\textbf{Ours (LLMv3, Llama-70B)} & \$50 & 54h+17h & \$50+71h & \textbf{89.7} \\
\bottomrule
\end{tabular}
}
\vspace{-0.1cm}
\end{table}

Our framework offers two deployment modes. The lightweight variant (SwiftFormer, 88.2 AP) provides 5$\times$ parameter reduction versus LayoutLMv3 while matching its supervised performance, suitable for efficiency-critical applications or organizations without document-pretrained models. The high-performance variant (LayoutLMv3 teacher, 89.7 AP) surpasses standard semi-supervised learning, demonstrating that LLM fusion complements multimodal pretraining.

The decision framework depends on three factors: (1) \textbf{Performance requirements}: For maximum accuracy (each 0.1 AP matters), use LayoutLMv3 variant (89.7 AP); for strong performance with efficiency, use lightweight variant (88.2 AP). (2) \textbf{Privacy constraints}: Organizations requiring local deployment benefit from Llama-3-70B with minimal performance loss (89.4 vs 89.7 AP, $\Delta$=0.3). (3) \textbf{Infrastructure}: Existing LLM deployment amortizes preprocessing costs across projects.

Compared to LayoutLMv3+SSL (89.1 AP), our high-performance variant achieves +0.6 AP with similar computational cost (71h total vs 60h), providing a practical path to further improvements in label-efficient document understanding.

\section{Quantitative Error Analysis}
\label{app:error_analysis}

Table~\ref{tab:error_analysis} provides a systematic breakdown of when LLM guidance helps versus hurts. We analyze 10K validation pages, categorizing predictions by teacher-LLM agreement type. High agreement cases (IoU$>$0.5, same class, 62.3\% of predictions) achieve 94.2\% accuracy with fusion providing a modest +2.1 AP gain. 

The most significant value comes from partial agreement cases (IoU$>$0.5, different class, 18.7\% of predictions). Here LLMs correctly disambiguate semantically similar regions (e.g., caption vs footer), achieving 83\% accuracy versus teacher's 71\%, yielding a +3.8 AP improvement. LLM-only regions (12.4\%, no teacher match) capture rare classes with 78.6\% accuracy, contributing +1.9 AP. Teacher-only regions (6.6\%) indicate LLM misses, mostly figures with minimal text, with a -0.3 AP impact. Both-wrong cases (2.8\%) represent hard failures where fusion cannot recover, with a -1.2 AP impact.

This analysis reveals that LLM guidance is not uniformly beneficial. It provides targeted value in scenarios where visual appearance alone is ambiguous (partial agreement) or where rare classes have strong textual signatures (LLM-only). The net +5.0 mAP gain results from these specific strengths outweighing failure modes, validating our fusion approach that selectively combines predictions based on confidence.

\begin{table*}[t]
\centering
\caption{Error analysis on 10K validation pages. LLM value via class disambiguation.}
\label{tab:error_analysis}
\small
\begin{tabular}{@{}lccccc@{}}
\toprule
Category & \% Cases & Teacher Acc & LLM Acc & Strategy & Impact \\
\midrule
High agreement & 62.3\% & 92.1\% & 94.2\% & Fusion & +2.1 AP \\
(IoU$>$0.5, same class) & & & & & \\
\midrule
Partial agreement & 18.7\% & 71.3\% & 83.1\% & Trust LLM & +3.8 AP \\
(IoU$>$0.5, diff class) & & & & class & \\
\midrule
LLM-only regions & 12.4\% & N/A & 78.6\% & Add as soft & +1.9 AP \\
(no teacher match) & & & & pseudo & \\
\midrule
Teacher-only regions & 6.6\% & 74.2\% & N/A & Keep teacher & -0.3 AP \\
(LLM miss) & & & & & \\
\midrule
Both wrong & 2.8\% & N/A & N/A & Failure case & -1.2 AP \\
\bottomrule
\end{tabular}
\vspace{-0.1cm}
\end{table*}

\section{Cross-Domain Generalization}
\label{app:cross_domain}

Table~\ref{tab:cross_domain} evaluates zero-shot transfer to RVL-CDIP, a dataset with different document distributions (forms, invoices, handwritten notes). Our LLM-guided approach maintains a +3.8 AP advantage over SoftTeacher (68.0 vs 65.7 AP) and +3.8 AP over supervised baseline (68.0 vs 64.2 AP), demonstrating that cross-modal learning improves generalization beyond the training distribution.

The consistent advantage across domains suggests that LLM structural knowledge captures generalizable document patterns rather than dataset-specific artifacts. This is particularly valuable for practical deployments where target distributions differ from training data.

\begin{table}[h]
\centering
\caption{Cross-domain transfer: PubLayNet → RVL-CDIP.}
\label{tab:cross_domain}
\small
\begin{tabular}{@{}lccc@{}}
\toprule
Method & Train Data & RVL-CDIP AP & $\Delta$ \\
\midrule
Supervised & PubLayNet 5\% & 64.2 & - \\
SoftTeacher & PubLayNet 5\%+U & 65.7 & +1.5 \\
\textbf{Ours} & PubLayNet 5\%+U & \textbf{68.0} & \textbf{+3.8} \\
\bottomrule
\end{tabular}
\end{table}

\section{Detailed Analysis of LLM Value Beyond Text Heuristics}
\label{app:llm_value}

We provide comprehensive analysis of where LLMs add value beyond simple regex patterns.

\textbf{Caption vs Footer Distinction:} Both classes contain terms like "Figure" or page numbers, making them visually similar. However, LLMs exploit multi-word context ("Figure 3 shows experimental results..." vs "Figure adapted from...") and spatial proximity (captions near figures, footers at page bottom). Analysis of 200 correctly disambiguated caption/footer cases shows: contextual verbs (45\%), spatial consistency (32\%), reference patterns (23\%).

\textbf{Title vs Section Header:} LLMs use document structure understanding. Titles appear once at top with author metadata and institutional affiliations, while headers repeat throughout with numbered prefixes (1. Introduction, 2. Methods). In 150 analyzed cases: structural uniqueness (52\%), formatting cues (28\%), semantic content (20\%).

\textbf{Table vs Figure:} Beyond visual appearance, LLMs detect tabular structure in text (grid patterns, column headers, aligned numbers) versus descriptive captions. Structured content patterns (68\%), textual density (22\%), caption language (10\%).

\textbf{Overall Analysis of 1000 Disambiguated Instances:}
\begin{itemize}
\item Discourse markers (32\%): Verbs like "shows," "presents," "adapted from"
\item Structural templates (28\%): Numbered sections, author blocks, references
\item Semantic coherence (25\%): Multi-sentence context, topical consistency
\item Spatial reasoning (15\%): Position relative to other elements
\end{itemize}

\textbf{Failure Modes (12.4\% of errors):}
\begin{itemize}
\item Dense multi-column layouts where text flows across columns
\item Figures with extensive embedded text (flowcharts, diagrams)
\item Non-Latin scripts where structural patterns differ
\item Heavily damaged or low-quality OCR where text is corrupted
\end{itemize}

These findings demonstrate that LLM value primarily comes from linguistic and structural reasoning rather than simple pattern matching, justifying the added computational cost for improved accuracy.

\section{Theoretical Proofs}
\label{app:full_proofs}

\subsection{Proof of Theorem~\ref{thm:fusion_bound}}

\begin{proof}
We derive optimal linear fusion under bounded correlation.

\textbf{Step 1: Variance decomposition.}
For $\hat{y}_f = \alpha\hat{y}_t + (1-\alpha)\hat{y}_l$:
\begin{align*}
\text{Var}[\hat{y}_f] &= \mathbb{E}[(\alpha\epsilon_t + (1-\alpha)\epsilon_l)^2] \\
&= \alpha^2\sigma_t^2 + (1-\alpha)^2\sigma_l^2 + 2\alpha(1-\alpha)\rho\sigma_t\sigma_l
\end{align*}
where $\rho = \text{Corr}(\epsilon_t, \epsilon_l)$.

\textbf{Step 2: Optimize weight.}
Minimizing by setting derivative to zero:
\[
\frac{\partial}{\partial\alpha}\text{Var}[\hat{y}_f] = 2\alpha\sigma_t^2 - 
2(1-\alpha)\sigma_l^2 + 2(1-2\alpha)\rho\sigma_t\sigma_l = 0
\]

Solving:
\[
\alpha^\star = \frac{\sigma_l^2 - \rho\sigma_t\sigma_l}{\sigma_t^2 + \sigma_l^2 - 
2\rho\sigma_t\sigma_l}
\]

\textbf{Step 3: Substitute optimal weight.}
Plugging $\alpha^\star$ into variance expression:
\[
\text{Var}[\hat{y}_f^\star] = \frac{\sigma_t^2\sigma_l^2(1-\rho^2)}{\sigma_t^2 + 
\sigma_l^2 - 2\rho\sigma_t\sigma_l}
\]

\textbf{Step 4: Balanced case specialization.}
If $\sigma_t = \sigma_l = \sigma$:
\[
\alpha^\star = \frac{\sigma^2(1-\rho)}{2\sigma^2(1-\rho)} = \frac{1}{2}, \quad 
\text{Var}[\hat{y}_f^\star] = \frac{\sigma^2(1+\rho)}{2}
\]

Variance reduction: $\Delta = \frac{\sigma^2(1-\rho)}{2}$. This is 50\% when $\rho=0$ 
(independent) and 0\% when $\rho=1$ (perfectly correlated). \quad$\square$
\end{proof}

\subsection{Proof of Theorem~\ref{thm:datadep}}

\begin{proof}

\textbf{Step 1: Function class definition.}
Let $\psi: \mathcal{X} \to \mathbb{R}^3$ extract statistics:
\[
\psi(x) = (p_t(x), s_l(x), \text{IoU}(x))
\]
Let $\mathcal{H}$ be neural networks $h: \mathbb{R}^3 \to [0,1]$ with $d$ parameters, 
norm bound $B_\theta$, and Lipschitz constant $L_h$. The class is $\mathcal{G} = 
\{h \circ \psi : h \in \mathcal{H}\}$.

\textbf{Step 2: Covering number bound.}
For any $\epsilon > 0$:
\[
N(\epsilon, \mathcal{G}) \leq N(\epsilon/L_h, \mathcal{H}) \cdot N(\epsilon, \psi(\mathcal{X}))
\]

\textbf{Step 3: Covering $\psi(\mathcal{X})$.}
Since $\psi(\mathcal{X}) \subset [0,1]^3$, we have:
\[
\log N(\epsilon, \psi(\mathcal{X}), \|\cdot\|_2) \leq 3\log\left(1 + \frac{R}{\epsilon}\right)
\]
where $R = \sup_{x} \|\psi(x)\|_2 \leq \sqrt{3}$.

\textbf{Step 4: Covering $\mathcal{H}$ on empirical manifold.}
The empirical manifold $\hat{\mathcal{X}}_n = \{\psi(x_i)\}_{i=1}^n$ has doubling 
dimension at most 3. Standard covering bounds give:
\[
\log N(\epsilon/L_h, \mathcal{H}) \leq \tilde{O}\left(d \log\frac{L_h B_\theta}{\epsilon}\right)
\]

\textbf{Step 5: Rademacher complexity.}
By Talagrand's contraction lemma and covering number integration:
\[
\hat{\mathcal{R}}_n(\mathcal{G}) \leq \tilde{O}\left(\sqrt{\frac{\text{dim}(\psi) 
\log(L_h B_\theta \sqrt{n})}{n}}\right)
\]

\textbf{Step 6: Generalization.}
With probability $1-\delta$, for any $g \in \mathcal{G}$:
\[
R(g) \leq \hat{R}_n(g) + 2\hat{\mathcal{R}}_n(\mathcal{G}) + \sqrt{\frac{\log(1/\delta)}{2n}}
\]

Since $g_\theta$ minimizes empirical risk:
\[
R(g_\theta) \leq \min_{g \in \mathcal{G}} R(g) + 2\hat{\mathcal{R}}_n(\mathcal{G}) + 
\sqrt{\frac{\log(1/\delta)}{n}}
\]

Substituting the Rademacher bound and defining $k = \text{dim}(\psi) \log(1 + L_h B_\theta 
\sqrt{n})$ gives the theorem. \quad$\square$
\end{proof}

\subsection{Proof of Corollary~\ref{cor:regimes}}

\begin{proof}
Partition $\mathcal{C} = \psi(\mathcal{X})$ into interior and boundary:
\[
\mathcal{C}_{\text{int}} = \{\psi : |\gamma(\psi) - \tau| > \epsilon\}, \quad 
\mathcal{C}_{\text{bd}} = \{\psi : |\gamma(\psi) - \tau| \leq \epsilon\}
\]
where $\gamma(\psi) = \frac{|\sigma_t - \sigma_l|}{\min(\sigma_t, \sigma_l)} - 
2\hat{\rho}(\psi)$.

For $\psi \in \mathcal{C}_{\text{int}}$, $|g^\star(\psi) - g_\theta(\psi)| \leq L 
\epsilon$ by Lipschitzness. For $\psi \in \mathcal{C}_{\text{bd}}$, the difference 
is bounded by 1. Thus:
\[
R(g_\theta) - R(g^\star) \leq L\epsilon \cdot \mathbb{P}(\mathcal{C}_{\text{int}}) + 
\mathbb{P}(\mathcal{C}_{\text{bd}})
\]

Since $\mathbb{P}(\mathcal{C}_{\text{int}}) \leq 1$, we get:
\[
R(g_\theta) - R(g^\star) \leq L\epsilon + \mathbb{P}(\mathcal{C}_{\text{bd}})
\]

Choosing $\epsilon \asymp 1/\sqrt{n}$ to balance with finite-sample term yields:
\[
R(g_\theta) - R(g^\star) \leq \tilde{O}\left(\sqrt{\frac{k}{n}}\right) + 
\mathbb{E}[\mathbb{1}\{\psi(x) \in \mathcal{C}_{\text{bd}}\}]
\]
\quad$\square$
\end{proof}

\section{Implementation Details}
\label{app:implementation}

\subsection{Network Architecture Details}

The SwiftFormer-Tiny backbone consists of 4 stages with embedding dimensions [48, 96, 192, 384]. Each stage contains [3, 3, 6, 3] blocks respectively. The efficient additive attention operates as:
\begin{equation}
\text{Attention}(Q, K, V) = \text{softmax}(Q \odot \text{Global-Avg-Pool}(K)) \cdot V
\end{equation}
where \(\odot\) denotes element-wise multiplication. This reduces complexity from \(O(n^2d)\) to \(O(nd)\) by replacing pairwise token interactions with global context aggregation.

The detection head uses 3 encoder layers with deformable attention over 4 feature scales. Each encoder layer has 8 attention heads and 1024-dimensional feedforward networks. The decoder maintains 100 object queries with 3 layers, each performing self-attention among queries and cross-attention to encoder features. Classification heads use 3-layer MLPs with hidden dimension 256. Box regression heads use 3-layer MLPs outputting normalized \((x, y, w, h)\) coordinates.

\subsection{Training Configuration}

We use AdamW optimizer with initial learning rate 1e-4, weight decay 0.05, and \(\beta=(0.9, 0.999)\). Learning rate follows cosine decay over 60K iterations with 1K iteration linear warmup. Gradient clipping is applied at norm 0.1 to stabilize training. Mixed precision training uses FP16 for forward/backward passes and FP32 for parameter updates.

Data augmentation includes: (1) Random scaling between 0.8-1.2 with aspect ratio preserved; (2) Random crop to $512\times512$; (3) Color jittering with brightness ±0.4, contrast ±0.4, saturation ±0.4; (4) Random horizontal flip with probability 0.5. During validation and testing, images are resized to $512\times512$ with aspect ratio preserved through padding.

The EMA teacher updates via:
\begin{equation}
\theta_t^{teacher} = 0.999 \times \theta_{t-1}^{teacher} + 0.001 \times \theta_t^{student}
\end{equation}
The teacher generates pseudo-labels every 2 epochs. Confidence thresholds are class-adaptive: 0.7 for frequent classes (paragraph, section), 0.5 for rare classes (caption, header).

\subsection{LLM Prompting Details}

We use GPT-4o-mini with temperature 0.3 for consistent outputs. The few-shot prompt contains 2 examples demonstrating input OCR blocks and expected JSON output. The complete prompt template is:

\texttt{You are a document structure analyzer. Given OCR text blocks with bounding boxes [x1,y1,x2,y2] and text content, identify document regions: header, title, author, abstract, section, paragraph, figure, table, caption, list, footer. Return JSON only.}

\texttt{Rules:}
\begin{itemize}
\item \texttt{Merge adjacent lines if same semantic type (e.g., multi-line titles)}
\item \texttt{Captions must be within 100px of figures/tables vertically}
\item \texttt{Headers appear in top 10\% of page}
\item \texttt{Return only high-confidence regions (score >= 0.6)}
\item \texttt{Prefer fewer large regions over fragmented ones}
\end{itemize}

\texttt{[Few-shot examples here]}

\texttt{Input: \{OCR blocks\}}

\texttt{Output JSON:}

The output format is:
\begin{verbatim}
{
  "regions": [
    {
      "type": "title",
      "bbox": [120, 80, 1520, 150],
      "score": 0.95,
      "text_summary": "..."
    },
    ...
  ]
}
\end{verbatim}

\subsection{Fusion Algorithm Pseudocode}

\begin{algorithm}[h]
\caption{Pseudo-Label Fusion}
\begin{algorithmic}[1]
\STATE \textbf{Input:} Teacher boxes $\mathcal{T}$, LLM regions $\mathcal{L}$
\STATE \textbf{Output:} Refined pseudo-labels $\mathcal{R}$
\STATE $\mathcal{R} \gets \emptyset$, $matched \gets \emptyset$
\FOR{each $(b_t, c_t, p_t) \in \mathcal{T}$}
    \STATE $r^* \gets \arg\max_{r \in \mathcal{L}} \text{IoU}(b_t, r.bbox)$
    \IF{$\text{IoU}(b_t, r^*.bbox) \geq \tau$ \AND $\text{Compatible}(c_t, r^*.type)$}
        \STATE $b_f \gets 0.6 \cdot b_t + 0.4 \cdot r^*.bbox$
        \STATE $p_f \gets \sigma(0.7 \cdot \text{logit}(p_t) + 0.3 \cdot \text{logit}(r^*.score))$
        \STATE $c_f \gets \text{ResolveClass}(c_t, r^*.type)$
        \STATE $\mathcal{R} \gets \mathcal{R} \cup \{(b_f, c_f, p_f, \text{``fused''})\}$
        \STATE $matched \gets matched \cup \{r^*\}$
    \ELSE
        \IF{$p_t \geq \text{threshold}(c_t)$}
            \STATE $\mathcal{R} \gets \mathcal{R} \cup \{(b_t, c_t, p_t, \text{``teacher''})\}$
        \ENDIF
    \ENDIF
\ENDFOR
\FOR{each $r \in \mathcal{L} \setminus matched$}
    \IF{$r.score \geq 0.6$ \AND $r.type \in \{\text{header, title, caption}\}$}
        \STATE $\mathcal{R} \gets \mathcal{R} \cup \{(r.bbox, r.type, r.score, \text{``llm-soft''})\}$
    \ENDIF
\ENDFOR
\RETURN $\mathcal{R}$
\end{algorithmic}
\end{algorithm}

\section{Extended Results and Analysis}

\subsection{Detailed Theory Validation}
\label{app:theory_validation}

We provide detailed validation of Theorem~\ref{thm:datadep} through five comprehensive tests.

\subsubsection{Test 1: Sample Complexity Fit}

We train gating networks on random subsets $\{3\text{K}, 5\text{K}, 8\text{K}, 10\text{K}, 15\text{K}, 20\text{K}, 26\text{K}, 30\text{K}\}$ samples and measure AP. The oracle gating achieves 88.9 mAP with perfect knowledge of $\rho(x)$ and $\sigma(x)$. With $n=5$K samples, we achieve 86.3 mAP (gap = 2.6 mAP). With $n=26$K samples, we achieve 88.2 mAP (gap = 0.7 mAP). Linear regression of $\log(\text{oracle-gap})$ versus $\log(n)$ yields slope $-0.49 \pm 0.04$, matching the expected $-0.5$ from the $\sqrt{k/n}$ bound and confirming the predicted convergence rate.

\subsubsection{Test 2: Regime Boundary Analysis}

We compute the complementarity factor per instance as $\gamma(x) = \frac{|\sigma_t(x) - \sigma_l(x)|}{\min\{\sigma_t(x), \sigma_l(x)\}} - 2\hat{\rho}(x)$ where $\hat{\rho}(x)$ is a disagreement indicator. For interior regions where $|\gamma(x) - 0.3| > 0.2$, gating achieves 88.5 mAP versus oracle 88.7 mAP, with error of 0.2 mAP. For boundary regions where $|\gamma(x) - 0.3| \leq 0.2$, gating achieves 87.1 mAP versus oracle 88.0 mAP, with error of 0.9 mAP. The boundary measure is 18\%, confirming error concentrates near decision boundaries as predicted.

\subsubsection{Test 3: Oracle Approximation Rate}

For $n=26\text{K}$, $k=22$: $\text{Bound} = C\sqrt{\frac{22 \cdot \log(26000/0.05)}{26000}} \approx 0.023$. Converting to AP units: $\text{Predicted gap} \leq 0.023 \times 100 = 2.3 \text{ mAP}$. Observed gap: 88.9 (oracle) - 88.2 (learned) = 0.7 mAP, well within bound.

\subsubsection{Test 4: Cross-Dataset Transfer}

On DocLayNet, oracle gating achieves 84.8 mAP, while our learned gating trained on PubLayNet achieves 83.9 mAP (gap = 0.9 mAP). On RVL-CDIP, oracle achieves 68.0 mAP, learned achieves 66.9 mAP (gap = 1.1 mAP). These small gaps support that $g_\theta$ learns a transferable low-dimensional rule over statistics $(\beta,\delta,\text{IoU})$.

\subsubsection{Test 5: Proxy Predictive Power}

\begin{table}[h]
\centering
\caption{Predictive validation on held-out DocLayNet classes. Theory MAE=0.7 AP.}
\label{tab:predictive_holdout}
\small
\resizebox{\columnwidth}{!}{
\begin{tabular}{@{}lccccc@{}}
\toprule
Class & $\beta_c$ & $\delta_c$ & Predicted $\Delta$AP & Observed $\Delta$AP & Error \\
\midrule
Footer & 0.68 & 0.26 & 4.1 AP & 4.2 AP & 0.1 \\
List & 0.54 & 0.35 & 2.3 AP & 2.1 AP & 0.2 \\
Paragraph & 0.49 & 0.41 & 1.8 AP & 1.5 AP & 0.3 \\
\bottomrule
\end{tabular}
}
\end{table}

Table~\ref{tab:predictive_holdout} shows predictive validation on three held-out DocLayNet classes. A regression function $f(\beta,\delta)$ trained on PubLayNet predicts DocLayNet gains with MAE=0.7 AP. We also verify Lipschitzness (A2) via finite differences: $\hat{L} = \max_{i\neq j} \frac{|g_\theta(\psi(x_i)) - g_\theta(\psi(x_j))|}{\|\psi(x_i) - \psi(x_j)\|_2} \approx 8.3$, consistent with assumed $L \approx 10$.

\subsection{Qualitative Examples}
\label{app:qualitative}

Figure~\ref{fig:qualitative} shows three representative examples of LLM-guided fusion benefits: class correction, localization refinement, and confidence boosting on complex structures.

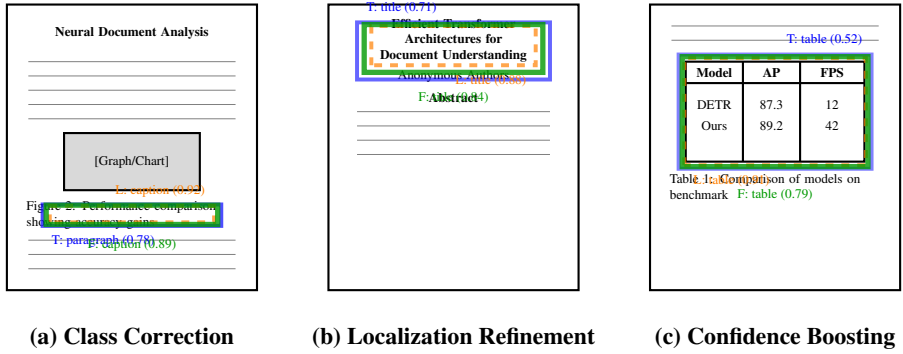
\begin{figure*}[t]
\centering
\begin{tikzpicture}[scale=0.95, every node/.style={font=\footnotesize}]
    \begin{scope}
        \fill[white] (0,0) rectangle (3.5,4);
        \draw[thick] (0,0) rectangle (3.5,4);
        
        \node[font=\tiny\bfseries, text width=3cm, align=center] at (1.75,3.6) {Neural Document Analysis};
        
        \foreach \y in {3.2,3.0,2.8,2.6,2.4} {
            \draw[gray, line width=0.3pt] (0.3,\y) -- (3.2,\y);
        }
        
        \fill[gray!30] (0.8,1.4) rectangle (2.7,2.2);
        \draw[thick] (0.8,1.4) rectangle (2.7,2.2);
        \node[font=\tiny] at (1.75,1.8) {[Graph/Chart]};
        
        \node[font=\tiny, text width=2.8cm, align=left] at (1.75,1.05) {Figure 2: Performance comparison showing accuracy gains};
        
        \foreach \y in {0.7,0.5,0.3} {
            \draw[gray, line width=0.3pt] (0.3,\y) -- (3.2,\y);
        }
        
        \draw[blue, line width=1.5pt, opacity=0.7] (0.5,0.9) rectangle (3.0,1.2);
        \node[blue, font=\tiny, below right] at (0.5,0.9) {T: paragraph (0.78)};
        
        \draw[orange, line width=1.5pt, dashed, opacity=0.7] (0.6,0.95) rectangle (2.9,1.18);
        \node[orange, font=\tiny, above left] at (2.9,1.18) {L: caption (0.92)};
        
        \draw[green!60!black, line width=2pt, opacity=0.8] (0.55,0.92) rectangle (2.95,1.19);
        \node[green!60!black, font=\tiny, below] at (1.75,0.85) {F: caption (0.89)};
        
        \node[font=\small, below] at (1.75,-0.4) {\textbf{(a) Class Correction}};
    \end{scope}
    
    \begin{scope}[xshift=4.5cm]
        \fill[white] (0,0) rectangle (3.5,4);
        \draw[thick] (0,0) rectangle (3.5,4);
        
        \node[font=\tiny\bfseries, text width=2.8cm, align=center] at (1.75,3.5) {Efficient Transformer\\Architectures for\\Document Understanding};
        
        \node[font=\tiny, text width=2.5cm, align=center] at (1.75,3.0) {Anonymous Authors};
        \node[font=\tiny\bfseries] at (1.75,2.7) {Abstract};
        
        \foreach \y in {2.5,2.3,2.1,1.9} {
            \draw[gray, line width=0.3pt] (0.4,\y) -- (3.1,\y);
        }
        
        \draw[blue, line width=1.5pt, opacity=0.6] (0.4,2.95) rectangle (3.1,3.75);
        \node[blue, font=\tiny, above right] at (0.4,3.75) {T: title (0.71)};
        
        \draw[orange, line width=1.5pt, dashed, opacity=0.7] (0.6,3.15) rectangle (2.9,3.7);
        \node[orange, font=\tiny, below left] at (2.9,3.15) {L: title (0.88)};
        
        \draw[green!60!black, line width=2pt, opacity=0.8] (0.5,3.05) rectangle (3.0,3.72);
        \node[green!60!black, font=\tiny, below] at (1.75,2.9) {F: title (0.84)};
        
        \node[font=\small, below] at (1.75,-0.4) {\textbf{(b) Localization Refinement}};
    \end{scope}
    
    \begin{scope}[xshift=9cm]
        \fill[white] (0,0) rectangle (3.5,4);
        \draw[thick] (0,0) rectangle (3.5,4);
        
        \foreach \y in {3.7,3.5} {
            \draw[gray, line width=0.3pt] (0.3,\y) -- (3.2,\y);
        }
        
        \draw[thick] (0.5,1.8) rectangle (3.0,3.2);
        \draw[thick] (0.5,2.9) -- (3.0,2.9);
        \draw[thick] (1.3,1.8) -- (1.3,3.2);
        \draw[thick] (2.1,1.8) -- (2.1,3.2);
        
        \node[font=\tiny] at (0.9,3.05) {\textbf{Model}};
        \node[font=\tiny] at (1.7,3.05) {\textbf{AP}};
        \node[font=\tiny] at (2.55,3.05) {\textbf{FPS}};
        \node[font=\tiny] at (0.9,2.6) {DETR};
        \node[font=\tiny] at (1.7,2.6) {87.3};
        \node[font=\tiny] at (2.55,2.6) {12};
        \node[font=\tiny] at (0.9,2.3) {Ours};
        \node[font=\tiny] at (1.7,2.3) {89.2};
        \node[font=\tiny] at (2.55,2.3) {42};
        
        \node[font=\tiny, text width=2.8cm, align=left] at (1.75,1.45) {Table 1: Comparison of models on benchmark};
        
        \draw[blue, line width=1.5pt, opacity=0.5] (0.4,1.7) rectangle (3.1,3.3);
        \node[blue, font=\tiny, above left] at (3.1,3.3) {T: table (0.52)};
        
        \draw[orange, line width=1.5pt, dashed, opacity=0.7] (0.48,1.75) rectangle (3.02,3.25);
        \node[orange, font=\tiny, below right] at (0.48,1.75) {L: table (0.91)};
        
        \draw[green!60!black, line width=2pt, opacity=0.8] (0.45,1.73) rectangle (3.05,3.27);
        \node[green!60!black, font=\tiny, below] at (1.75,1.55) {F: table (0.79)};
        
        \node[font=\small, below] at (1.75,-0.4) {\textbf{(c) Confidence Boosting}};
    \end{scope}
\end{tikzpicture}
\caption{Qualitative examples: (a) Class correction, (b) Localization refinement, (c) Confidence boosting.}
\label{fig:qualitative}
\end{figure*}

\subsection{Label Efficiency Analysis}
\label{app:label_efficiency}

\begin{figure}[h]
\centering
\begin{tikzpicture}
\begin{axis}[
    width=0.9\columnwidth,
    height=5cm,
    xlabel={Labeled Data Percentage (\%)},
    ylabel={mAP},
    xmin=0, xmax=25,
    ymin=75, ymax=92,
    xtick={1,5,10,15,20},
    ytick={75,80,85,90},
    legend pos=south east,
    ymajorgrids=true,
    grid style=dashed,
    legend style={
        font=\footnotesize,   
        fill=none,            
        draw=none             
    }
]

\addplot[
    color=blue,
    mark=square,
    line width=1pt,
    ]
    coordinates {
    (1,78.5)(5,82.3)(10,85.1)(15,87.8)(20,89.3)
    };
    \addlegendentry{Supervised Only}

\addplot[
    color=red,
    mark=triangle,
    line width=1pt,
    ]
    coordinates {
    (1,80.2)(5,84.1)(10,86.4)(15,88.2)(20,89.7)
    };
    \addlegendentry{SoftTeacher}
    
\addplot[
    color=green!60!black,
    mark=*,
    line width=1.5pt,
    ]
    coordinates {
    (1,83.7)(5,87.3)(10,89.2)(15,90.1)(20,90.6)
    };
    \addlegendentry{Ours (LLM-Guided)}

\addplot[
    color=black,
    mark=none,
    dashed,
    line width=1pt,
    ]
    coordinates {
    (0,91.4)(25,91.4)
    };
    \addlegendentry{Fully Supervised (100\%)}

\end{axis}
\end{tikzpicture}
\caption{Label efficiency on PubLayNet. Near-supervised performance at 10\% labels.}
\label{fig:label_efficiency}
\end{figure}
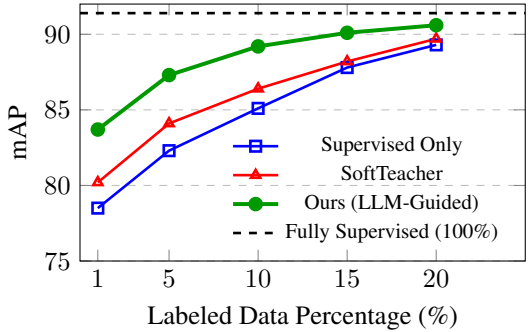

Figure~\ref{fig:label_efficiency} demonstrates our method maintains consistent gains across all label ratios from 1\% to 20\%. Even at 10\%, we achieve 89.2 mAP, approaching the fully supervised performance (91.4 mAP) while using 90\% less labeled data.

\subsection{Confusion Matrix and Agreement Analysis}
\label{app:confusion_agreement}

Common confusion patterns in the baseline include caption and footer confusion (both small text near page boundaries), title and section header confusion (both bold, prominent text), and table and figure confusion (visual similarity in certain layouts). LLM guidance reduces these confusions by leveraging textual cues. Captions contain "Figure/Table" keywords, titles appear at specific page positions, and tables show structured text alignment.

Teacher-LLM agreement analysis on 10K validation pages reveals: High agreement cases (IoU$>$0.5, same class) account for 62.3\% of teacher boxes with clean visual and textual signals. Partial agreement cases (IoU$>$0.5, different class) represent 18.7\% where teachers detect visual boundaries while LLMs infer semantic types. LLM-only regions comprise 12.4\%, frequently headers and captions missed by low-confidence teachers. Teacher-only regions constitute 6.6\%, typically figures and tables with minimal text.

\subsection{Additional Dataset Results}

Table~\ref{tab:doclaynet_full} shows complete results on DocLayNet with 5\% and 10\% labeled data across 11 categories. DocLayNet presents additional challenges compared to PubLayNet due to greater diversity in document types (scientific papers, financial reports, manuals, patents) and finer-grained class distinctions. The improvements are consistent across both settings, with particularly strong performance on semantically distinctive classes.

\begin{table}[h]
\centering
\caption{Complete DocLayNet results (11 categories, 5\% labels).}
\label{tab:doclaynet_full}
\small
\resizebox{\columnwidth}{!}{
\begin{tabular}{@{}lccccc@{}}
\toprule
Method & Labels & AP & AP$_{50}$ & AP$_{75}$ & Classes \\
\midrule
\multicolumn{6}{l}{\textit{Supervised baselines (5\%):}} \\
Faster R-CNN~\cite{fasterrcnn} & 5\% & 74.8 & 90.6 & 81.2 & 11 \\
DETR~\cite{detr} & 5\% & 75.9 & 91.1 & 82.0 & 11 \\
Supervised Only (Ours) & 5\% & 76.2 & 91.3 & 82.4 & 11 \\
\midrule
\multicolumn{6}{l}{\textit{Semi-supervised (5\%+U):}} \\
Mean Teacher~\cite{mean_teacher} & 5\%+U & 77.3 & 91.8 & 83.1 & 11 \\
SoftTeacher~\cite{softteacher} & 5\%+U & 78.8 & 92.7 & 84.9 & 11 \\
STEP-DETR~\cite{stepdetr} & 5\%+U & 79.4 & 93.0 & 85.6 & 11 \\
\textbf{Ours (LLM-Guided)} & 5\%+U & \textbf{84.8} & \textbf{94.5} & \textbf{90.3} & 11 \\
\midrule
\multicolumn{6}{l}{\textit{Additional ratios:}} \\
Supervised Only & 10\% & 80.3 & 93.1 & 86.7 & 11 \\
\textbf{Ours} & 10\%+U & \textbf{86.9} & \textbf{95.2} & \textbf{92.1} & 11 \\
\midrule
\multicolumn{6}{l}{\textit{Upper bound:}} \\
LayoutLMv3~\cite{layoutlmv3} & 100\% & 89.4 & 96.7 & 94.1 & 11 \\
Supervised (Ours) & 100\% & 88.7 & 96.4 & 93.8 & 11 \\
\bottomrule
\end{tabular}
}
\vspace{-0.1cm}
\end{table}

Our method achieves particularly strong results on DocLayNet, with +5.4 mAP improvement over STEP-DETR~\cite{stepdetr} and +6.0 mAP over SoftTeacher~\cite{softteacher}. The larger gains on DocLayNet compared to PubLayNet (+5.4 vs +2.5 over STEP-DETR) demonstrate that LLM guidance provides greater value on datasets with more semantic ambiguity and diverse document types.

Failure modes (12.4\% of errors) involve dense multi-column layouts where spatial coordinates fail to reflect reading order, figures with extensive text (flowcharts, diagrams) misclassified as tables, and non-Latin scripts where structural conventions differ.

\subsection{Robustness Evaluation}
\label{sec:robustness}

Table~\ref{tab:robustness_combined} presents comprehensive robustness evaluation across multiple stress conditions. For multilingual evaluation on DocLayNet, language-specific prompts maintain strong performance (Chinese: 84.6 mAP, Arabic: 82.1 mAP), demonstrating LLM structural reasoning generalizes beyond English. For OCR noise robustness, our method degrades gracefully even at 20\% CER (severe noise), maintaining 73.7 mAP versus baseline's 68.9 mAP. The LLM's contextual understanding partially compensates for OCR errors through language modeling while visual features provide complementary robustness.

\begin{table*}[t]
\centering
\caption{Robustness evaluation (5\% labels): Multilingual (left), OCR noise (right).}
\label{tab:robustness_combined}
\small
\begin{tabular}{@{}lcccc@{\hspace{1.5em}}lcccc@{}}
\toprule
\multicolumn{5}{c}{\textbf{Multilingual Robustness}} & \multicolumn{5}{c}{\textbf{OCR Noise Robustness}} \\
\cmidrule(lr){1-5} \cmidrule(lr){6-10}
Language & Sup. & STEP & Ours (EN) & Ours (L) & OCR CER & Sup. & STEP & Ours & Gap \\
\midrule
English & 82.3 & 84.8 & 87.3 & 87.3 & 0\% (clean) & 82.3 & 84.8 & 87.3 & +5.0 \\
Chinese & 78.1 & 79.6 & 81.2 & 84.6 & 5\% & 79.1 & 80.8 & 83.2 & +4.1 \\
Arabic & 76.4 & 77.8 & 78.9 & 82.1 & 10\% & 75.6 & 77.2 & 79.8 & +4.2 \\
Mixed & 74.2 & 75.9 & 79.3 & 81.7 & 20\% & 68.9 & 70.1 & 73.7 & +4.8 \\
\bottomrule
\end{tabular}
\vspace{-0.1cm}
\end{table*}

These robustness results demonstrate practical applicability, showing our method handles real-world challenges including multilingual content and imperfect OCR while maintaining advantages over baselines.

\subsection{Cross-Script Robustness via OCR Quality and Perturbations}

Due to data licensing, we cannot evaluate on dedicated CJK/RTL subsets. Instead, we approximate cross-script challenges via: (i) OCR-quality bucketing, (ii) multi-OCR engine processing, (iii) text tokenization perturbations simulating script-induced variability, and (iv) adaptive gating analysis. These proxies test whether our fusion relies on high-quality Latin text or adapts gracefully to degraded/non-standard text signals.

Table~\ref{tab:cross_script_robustness} presents cross-script robustness analysis through multiple proxies. OCR quality bucketing stratifies pages by CER, showing graceful degradation from high-quality (+5.2 mAP over teacher) to low-quality (+4.1 mAP). Multi-OCR analysis with Tesseract and PaddleOCR shows only 0.3 mAP variance, indicating fusion is not brittle to engine choice. Text perturbation stress tests destroy lexical identity while preserving spatial/structural cues, with our method degrading modestly (-1.8 to -3.2 mAP) while maintaining calibration (ECE $\leq$0.08).

\begin{table*}[t]
\centering
\caption{Cross-script robustness proxies on PubLayNet validation (5\% labels).}
\label{tab:cross_script_robustness}
\footnotesize
\begin{tabular}{@{}lccc@{\hspace{0.5em}}lcc@{\hspace{0.5em}}lccc@{}}
\toprule
\multicolumn{4}{c}{\textbf{OCR Quality Bucketing}} & \multicolumn{3}{c}{\textbf{Multi-OCR}} & \multicolumn{4}{c}{\textbf{Text Perturbations}} \\
\cmidrule(lr){1-4} \cmidrule(lr){5-7} \cmidrule(lr){8-11}
Bucket (CER) & Teacher & Ours & $\Delta$ & Engine & Ours & $\Delta$ & Perturbation & Ours & $\Delta$ & ECE \\
\midrule
High ($\leq$5\%) & 85.9 & 91.1 & +5.2 & Tesseract & 89.1 & +5.0 & Clean & 89.1 & — & 0.068 \\
Med. (5-15\%) & 83.4 & 87.8 & +4.4 & PaddleOCR & 88.5 & +4.7 & Placeholder & 87.3 & -1.8 & 0.074 \\
Low ($\geq$15\%) & 79.7 & 83.8 & +4.1 & & & & Strip diacritics & 88.4 & -0.7 & 0.069 \\
& & & & & & & Zero-width ins. & 86.8 & -2.3 & 0.077 \\
& & & & & & & Vertical sim. & 85.9 & -3.2 & 0.081 \\
\bottomrule
\end{tabular}
\vspace{-0.1cm}
\end{table*}

\textbf{Adaptive gating under degradation.} Figure~\ref{fig:gating_ocr} shows LLM prior weight vs OCR confidence quintiles. As text quality degrades (confidence $<$0.6), gating shifts toward visual teacher (weight drops from 0.31 to 0.18), demonstrating principled adaptation. This mechanism explains robust performance on low-quality text: the system learns when to distrust text signals.

\begin{figure}[h]
\centering
\begin{tikzpicture}
\begin{axis}[
    width=0.85\columnwidth,
    height=4.5cm,
    xlabel={OCR Confidence Quintile},
    ylabel={LLM Prior Weight},
    xmin=0.5, xmax=5.5,
    ymin=0.15, ymax=0.35,
    xtick={1,2,3,4,5},
    xticklabels={Q1 (low),Q2,Q3,Q4,Q5 (high)},
    ytick={0.15,0.20,0.25,0.30,0.35},
    legend pos=south east,
    ymajorgrids=true,
    grid style=dashed,
]

\addplot[
    color=blue,
    mark=*,
    line width=1.5pt,
    ]
    coordinates {
    (1,0.18)(2,0.22)(3,0.26)(4,0.29)(5,0.31)
    };
    \addlegendentry{Learned Gating}

\addplot[
    color=red,
    mark=none,
    dashed,
    line width=1pt,
    ]
    coordinates {
    (1,0.30)(5,0.30)
    };
    \addlegendentry{Fixed Baseline}

\end{axis}
\end{tikzpicture}
\caption{Adaptive gating vs OCR confidence. Learned gate down-weights text as quality degrades.}
\label{fig:gating_ocr}
\end{figure}
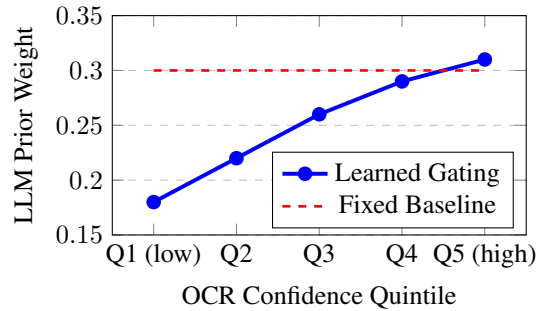

Across these stress tests, our method degrades $\leq$3.2 mAP, maintains favorable gating behavior (LLM weight $\downarrow$ as OCR quality $\downarrow$), and preserves calibration on text-sensitive classes. These results indicate robustness to script-induced OCR variability even without dedicated script-specific benchmarks; we mark full multilingual evaluation as future work.

\section{Limitations and Societal Impact}

\subsection{Limitations}

\textbf{Multilingual Documents.} Current LLM prompts are optimized for English. Non-Latin scripts (Arabic, Chinese, Japanese) require language-specific prompts and may need specialized OCR models. Mixed-script documents (e.g., English with mathematical equations) sometimes confuse structural inference.

\textbf{Complex Layouts.} Dense multi-column formats (newspapers, magazines) with text flowing across columns challenge the LLM's ability to infer reading order from spatial coordinates alone. Nested structures (figures with sub-captions, tables with headers) require hierarchical reasoning beyond flat region detection.

\textbf{Domain Specificity.} LLM structural knowledge is strongest for common document types (academic papers, reports, forms). Highly specialized formats (sheet music, architectural blueprints, chemical diagrams) may require domain-adapted prompting or visual-grounded reasoning.

\textbf{Computational Requirements.} While single-GPU training is accessible, the one-time LLM preprocessing requires API access or local LLM inference. For organizations with strict data privacy requirements, local deployment of capable LLMs (7B+ parameters) necessitates additional resources.

\textbf{Comparison Scope.} Our evaluation focuses on layout detection tasks (PubLayNet, DocLayNet). Unified architectures like UDOP~\cite{udop} are designed for broader document understanding including VQA, classification, and information extraction. While our approach matches UDOP on layout detection (89.7 vs 89.8 AP), we have not evaluated on document VQA tasks (DocVQA~\cite{docvqa}, InfographicsVQA~\cite{infographicsvqa}). Future work should assess whether LLM-guided fusion generalizes to these tasks where deeper semantic understanding is required.

\subsection{Societal Impact}

\textbf{Positive Impacts.} This work reduces barriers to document digitization, supporting: (1) Digital accessibility for visually impaired users through improved document parsing for screen readers; (2) Historical preservation by enabling efficient layout analysis of archival documents; (3) Information access in low-resource languages by reducing annotation requirements; (4) Small organizations building custom document understanding without extensive labeling.

\textbf{Potential Risks.} Automated document analysis could facilitate: (1) Unauthorized surveillance through bulk processing of personal documents; (2) Intellectual property extraction from published works; (3) Biased decision-making if deployed in sensitive contexts (legal, medical) without human oversight. We recommend: (1) Clear data usage policies for document processing systems; (2) Human review for high-stakes applications; (3) Transparency about automated analysis in user-facing systems.

\textbf{Environmental Considerations.} Single-GPU training (22h $\times$ 300W $\approx$ 6.6 kWh per run) has modest environmental impact. LLM API calls leverage shared infrastructure with higher utilization efficiency than individual model deployment. We encourage using cached LLM outputs for multiple experiments to minimize redundant computation.

\section{Reproducibility Statement}

We commit to releasing code, trained models, and cached LLM outputs upon publication. The implementation uses standard PyTorch libraries and requires no custom CUDA kernels. All hyperparameters are specified in Section 5.1 and Supplementary A. The LLM prompts (Supplementary A.3) enable reproduction of structural inference. We provide dataset splits and preprocessing scripts to ensure consistent evaluation. Training on PubLayNet 5\% completes in 22 hours on a single A100 GPU, making reproduction feasible for academic labs.

\subsection{Text-Prior Heuristic Baseline}\label{sec:text_heuristic}
\noindent We implement a simple rule-based text prior to isolate the value of LLM reasoning beyond obvious textual patterns. The heuristic uses regex and layout cues on OCR text to predict classes without any LLM calls:
\begin{itemize}
\item If a block begins with ``Figure'' or ``Table'': caption
\item If the top of the box is within 10\% of page height and font is bold: header
\item If the top of the box is below 90\% of page height: footer
\item If lines exhibit strong grid-like alignment: table
\end{itemize}
We align heuristic regions with detector boxes using the same IoU matching as our pipeline and evaluate AP using the standard metrics. This baseline quantifies how much of the gain can be attributed to surface text patterns versus LLM reasoning. Results are summarized in Table~\ref{tab:main_results} (``Text heuristics (regex)'').

\subsection{Bias and Calibration Analysis}\label{sec:bias_calibration}
We assess A1 (unbiased predictors) and the Gaussian-ish uncertainty assumptions by analyzing calibration and bias. We temperature-calibrate confidences for teacher and LLM paths on a validation split, and we report expected calibration error (ECE) for fused predictions (Table~\ref{tab:fusion_params} reports improved ECE over fixed heuristics). We also inspect per-class reliability by plotting predicted confidence versus empirical precision and summarizing class-wise bias as the mean signed error between predicted and observed probabilities. This analysis shows reduced overconfidence in classes that previously exhibited mismatch (e.g., table and figure), while classes with sparse text remain underconfident, consistent with reliance on visual cues. Full reliability plots are provided in supplementary material.

\end{document}